\newtheorem{theorem}{Theorem}
\newtheorem{lemma}{Lemma}
\newtheorem{definition}{Definition}
\newtheorem{propositionc}[theorem]{Proposition}
\newtheorem{construction}{Construction}
\newcommand{\modeprop}{\gamma^{\mathrm{mode}}}
\newcommand{\maplink}{\psi^{\varphi}}
\newcommand{\maplinklevel}{\psi^{\varphi}_{y}}
\newcommand{\maplinknoise}{\psi^{P , \alpha}}
\DeclareMathOperator*{\argmax}{arg\,max}
\DeclareMathOperator*{\argmin}{arg\,min}
\newcommand{\reals}{\mathbb{R}}
\newcommand{\prop}{\mathrm{prop}}
\newcommand{\concvx}{\mathrm{cons}_\mathrm{cvx}}
\newcommand{\eliccvx}{\mathrm{elic}_\mathrm{cvx}}
\newcommand{\affhull}{\mathrm{affhull}}
\newcommand{\neigh }{\mathrm{ne}}
\renewcommand{\dim}{\mathrm{dim}}
\newcommand{\simplex}{\Delta_\Y}
\newcommand{\B}{\mathcal{B}}
\newcommand{\C}{\mathcal{C}}
\newcommand{\D}{\mathcal{D}}
\newcommand{\E}{\mathbb{E}}
\renewcommand{\H}{\mathcal{H}}
\renewcommand{\L}{\mathcal{L}}
\renewcommand{\P}{\mathcal{P}}
\newcommand{\R}{\mathcal{Y}}
\newcommand{\Sc}{\mathcal{S}}
\newcommand{\X}{\mathcal{X}}
\newcommand{\Y}{\mathcal{Y}}
\newcommand{\toto}{\rightrightarrows}
\newcommand{\conv}{\mathrm{conv}\,}
\newcommand{\ones}{\mathbbm{1}}
\renewcommand{\emptyset}{\varnothing}
\newcommand{\vertex}{\mathrm{vert}}
\newcommand{\haty}{\hat{y}}
\newcommand{\ignore}[1]{}
\definecolor{darkblue}{rgb}{0.0,0.0,0.2}
\definecolor{darkgreen}{rgb}{0.0,0.3,0.0}
\newcommand{\Comments}{0}
\newcommand{\mynote}[2]{\ifnum\Comments=1\textcolor{#1}{#2}\fi}
\newcommand{\mytodo}[2]{\ifnum\Comments=1	\todo[linecolor=#1!80!black,backgroundcolor=#1,bordercolor=#1!80!black]{#2}\fi}
\newcommand{\dk}[1]{\mynote{purple}{[DK: #1]}}
\newcommand{\journal}[1]{}%{\mytodo{gray!20!white}{BTW: #1}}%TURN OFF FOR NOW \mytodo{gray}{#1}}
\title{Trading off Consistency and Dimensionality of Convex Surrogates for the Mode}
\author{
Enrique Nueve\\
Department of Computer Science\\
University of Colorado Boulder\\
\texttt{enrique.nueveiv@colorado.edu} \\
\And
Bo Waggoner \\
Department of Computer Science\\
University of Colorado Boulder \\
\texttt{bwag@colorado.edu} \\
\AND
Dhamma Kimpara \\
Department of Computer Science\\
University of Colorado Boulder\\
\texttt{dhamma.kimpara@colorado.edu} \\
\And
Jessie Finocchiaro\thanks{Most of this work was completed while author was at Harvard University CRCS} \\
Department of Computer Science\\
Boston College \\
\texttt{finocch@bc.edu} 
}
\begin{document}

\maketitle

\begin{abstract}
In multiclass classification 
over $n$ outcomes, we typically optimize some \emph{surrogate loss} $L: \reals^d \times\Y \to \reals$ assigning real-valued error to predictions in $\reals^d$.
In this paradigm, outcomes must be embedded into the reals with dimension $d \approx n$ in order to design a \emph{consistent} surrogate loss.
Consistent losses are well-motivated theoretically, yet for large $n$, such as in information retrieval and structured prediction tasks, their optimization may be computationally infeasible.
In practice, outcomes are typically embedded into some $\reals^d$ for $d \ll n$, with little known about their suitability for multiclass classification.
We investigate two approaches for trading off consistency and dimensionality in multiclass classification while using a convex surrogate loss.
We first formalize \emph{partial consistency} when the optimized surrogate has dimension $d \ll n$. 
We then check if partial consistency holds under a given embedding and low-noise assumption, providing insight into when to use a particular embedding into $\reals^d$. 
Finally, we present a new method to construct (fully) consistent losses with $d \ll n$ out of multiple problem instances.
Our practical approach leverages parallelism to sidestep lower bounds on $d$.
\end{abstract}
%%%%%%%%%%%%%%%%%%%%%%%%%
%%%%%%%%%%%%%%%%%%%%%%%%%

%%%%%%%%%%%%%%
\section{Introduction}

% For general finite prediction task, learning theorist often model said task with some discrete \emph{target loss}.
% However, minimizing this discrete target is almost always computationally intractable.
Multiclass classification, due to its combinatorial and discontinuous nature, is intractable to optimize directly, which drives machine learners to optimize some nicer \emph{surrogate loss}. 
To ensure these surrogates properly ``correspond'' to the discrete classification task, we seek to design \emph{consistent} surrogates.
If one uses a consistent surrogate loss, in the limit of infinite data and model expressivity, one ends up with the same classifications as if one had solved the original intractable problem directly with probability $1$.
% Consistency of a surrogate ensures that we learn the same model or estimate the same statistic by minimizing the surrogate as we would if we could have directly minimized the discrete loss.

Surrogate losses form the backbone of gradient-based optimization for classification tasks. Optimizing a surrogate is easier than direct optimization, but a large dimension $d$ of the surrogate loss $L:\reals^{d}\times \Y\to\reals$ can make gradient-based optimization intractable.
Therefore, previous literature has operated under the premise that the prediction dimension $d$ should be as low as possible, subject to consistency for the classification task~\citep{ramaswamy2016convex,finocchiaro2024embedding,finocchiaro2020embedding}. 
For multi-class classification over $n$ outcomes, the lower bound on $d$ is $n-1$ \citep{ramaswamy2016convex}.

These previous works implicitly focus on a binary approach to consistency: a surrogate is either consistent for every possible label distribution, or it is not consistent. 
But there is a way out: lower bounds on the surrogate dimension $d$ rely on edge-cases that rarely show up in reality \citep{ramaswamy2016convex}.
As a result, practitioners are often willing to trade-off the guarantee of consistency in order to improve the computational tractability of optimization.
However, we currently lack rigorous analysis tools to analyze many of the partially-consistent surrogates commonly used in practice. 
Thus, \emph{unlike previous works, our work focuses on this more realistic paradigm of partial consistency.}
We apply our unique approach to rigorously analyze a popular surrogate construction that encompasses methods such as one-hot and binary encoding.
Our approach allows for fine-grained control of the trade-off between consistency and dimension.

\begin{comment}
 \dk{suggest the above paragraph and cut the following:}
However, for classification-like tasks, \citet{ramaswamy2016convex} show that $d = n-1$ dimensions are required for a surrogate to be consistent.
Their proof relies on finding a counterexample where the classification problem is close to indifferent between all $n$ outcomes.

In reality, these scenarios are rare, and in turn, practitioners are often willing to sacrifice the strong guarantees of consistency to improve the computational tractability of optimization.
Our work address this gap 

In some sense, consistency has previously been studied as a binary condition: a surrogate is either consistent or inconsistent.
However, to date, little work has examined in what settings are these common simplifications justified, leading us to study a relaxation of such that the surrogate \emph{often} yields the same classifications as the original, intractible problem.
\dk{end cut}   
\end{comment}

Prior works have informally brushed upon the proposed partial-consistency paradigm, without rigorous study.
For example, \citet{agarwal2015consistent} impose a low-noise assumption to construct a surrogate for classification with $d = \log(n)$.
However, their work does not provide any way to control the consistency-dimension trade-off.
Similarly, \citet{struminsky2018quantifying} characterize the excess risk bounds of inconsistent surrogates, which teaches us about the learning rates for inconsistent surrogates, but not \emph{under which distributional assumptions} we can recover consistency guarantees.

Using different techniques than both of these approaches, we seek to understand the tradeoffs of consistency, surrogate prediction dimension, and number of problem instances through the use of polytope embeddings which are common in the literature \citep{wainwright2008graphical,blondel2020learning}. 
%\jessiet{As written, Theorems 4 and 5 seem the same to me, and the hopping from 4 --> 3 --> 5 feels messy. commenting out next sentence to clean it up}
%We show in Theorem~\ref{thm:consvert} that regardless of the number of outcomes embedded into a lower-dimensional polytope, a full-dimensional set of distributions always exists for which consistency holds. 
When embedding outcomes into $d \ll n$ dimensions, we first show there always exists a set of distributions where \emph{hallucinations} occur: where the report minimizing the surrogate leads to a prediction $\hat y$ such that the underlying true distribution has no weight on the prediction; that is, $Pr[Y = \hat y] = 0$~ (Theorem~\ref{thm:halreal}).
Following this, we show that every polytope embedding is partially consistent under strong enough low-noise assumptions (Theorem~\ref{thm:lownoisexists}).
Finally, we demonstrate through leveraging the embedding structure and multiple problem instances that the mode (in particular, a full rank ordering) over $n$ outcomes embedded into a $\frac{n}{2}$ dimensional surrogate space is elicitable over all distributions via $O(n^2)$ problem instances (Theorem~\ref{thm:multi-instance}).
This alternative approach to recovering consistency is parallelizable, detangling the complexity of gradient computation of one high-dimensional surrogate.

\section{Background and Notation}
%\jessiet{I'm going to try to skim this down quite a bit. Right now it reads pretty intensely, and not all of it is needed to understand most of the paper.}
Let $\Y$ be a finite label space, and throughout let $n = |\Y |$.
Define $\reals_{+}^{\Y}$ to be the nonnegative orthant.
%, i.e, $\reals^{\Y}_{+}=\{ u \in\reals^{\Y}| \forall y\in\Y, u_y\geq 0  \}$.
Let $\Delta_{\Y}=\{p\in \reals^{\Y}_{+}\mid \|p\|_{1}=1  \}$ %\dkt{don't forget to use '[backslash]|' instead of ||} 
be the set of probability distributions on $\Y$, represented as vectors.
We denote the point mass distribution of an outcome $y\in\Y$ by $\delta_{y}\in\Delta_{\Y}$.
Let $[d]:= \{1,\dots ,d\}$.
In general, we denote a discrete loss by $\ell :\Y \times \Y \to \reals_{+}$ with outcomes denoted by $y\in \Y$ and a surrogate loss by
$L:\reals^{d}\times \Y \to \reals$ with surrogate reports $u\in\reals^{d}$ and outcomes $y\in\Y$. 
The surrogate must be accompanied by a link $\psi : \reals^d \to \R$ mapping the convex surrogate model's predictions back into the discrete target space, and we discuss consistency of a \emph{pair} $(L, \psi)$ with respect to the target $\ell$.

For $\epsilon > 0$, we define an epsilon ball via $B_{\epsilon}(u)=\{x\in\reals^{d}\mid \|u-x\|_{2}<\epsilon  \}$ and $B_\epsilon := B_\epsilon(\vec 0)$.
Given a closed convex set $\C \subset \reals^{d}$, we define a projection operation onto $\C$ via $\text{Proj}_{\C}(u) := \argmin_{x\in \C} \|u-x\|_{2}$.
Given a closed convex set $\C \subset \reals^{d}$ and $u\in \reals^{d}$, we let the set-pointwise distance to be defined as $\|u-\C \|_{2} := \|u- \text{Proj}_{\C}(u) \|_{2}$.
Full tables of notation are found in Appendix~\ref{app:notation}.

%%%%%%%%%%%%%%%%%%%%%%%

%%%%%%%%%%%%%%%%%%%%%

\subsection{Property Elicitation, Consistency, and Prediction Dimension}
Discrete label prediction requires optimization of a target loss function, $\ell$, e.g. multi-class classification and 0-1 loss.
When designing surrogate losses, consistency is the key notion of correspondence between surrogate and target loss.
Intuitively, consistency implies that minimizing surrogate risk corresponds to solving the target problem.
\citet{finocchiaro2021unifying} show that surrogate loss consistency is a necessary precursor to excess risk bounds and convergence rates.

% For a more thorough discussion on the relationship between property elicitation and consistency, we refer the reader to \cite{finocchiaro2021unifying}.
Consistency is generally a difficult condition to work with directly.
Hence, we will use the notion of \emph{calibration}, which is equivalent to consistency in our setting with finite outcomes.
Our approach follows from the property elicitation literature, which allows us to abstract away from the feature space $\mathcal{X}$ and focus on the conditional distributions over the labels,  $p = \Pr[Y \mid X = x] \in \simplex $ \citep{bartlett2006convexity,tewari2007consistency,zhang2004statistical,ramaswamy2016convex,steinwart2007compare}. 
In this approach, the central object of study is a \emph{property} which maps label distributions to reports that minimize the loss.

\begin{definition}[Property, Elicits, Level Set]\label{def:elicit}
Let $\mathcal{R}$ be an arbitrary report set.
For $\mathcal{P}\subseteq \Delta_{\Y}$,
    a property is a set-valued function $\Gamma :\mathcal{P}  \to 2^\mathcal{R}\setminus \{\emptyset \}$, which we denote $\Gamma : \mathcal{P}\toto \R$. 
   A loss $L:\mathcal{R} \times \Y\to  \reals$ elicits the property $\Gamma$ on $\P$ if 
   $$\forall \; p \in \mathcal{P}, \; \Gamma (p)=\argmin_{u\in\mathcal{R}}\E_{Y\sim p} [L(u,Y)]~.$$
If $ L$ elicits a property, it is unique and we denote it $\prop [L]$.
The level set of $\Gamma$ for report $r$ is the set $\Gamma_r := \{ p\in\mathcal{P} \mid r = \Gamma (p) \}$. 
If $\prop [L]=\Gamma$ and $|\Gamma (p)|=1$ for all $p\in  \P$, we say that $L$ is strictly proper for $\Gamma$.
\end{definition}
%\rick{Need to think this over, the mode is not always $|\Gamma (p)|=1$, since I go for disjoint regions I think this is okay but may need to update proofs [ ]}
In this work, $\mathcal{R} =\Y$ for target losses and $\mathcal{R} = \reals^{d}$ for surrogate losses.  
%%%%%%%%%%%%%%%%%%%%

% To connect property elicitation to consistency, we work through the notion of calibration, which is equivalent to consistency in the finite outcome setting~\citep{bartlett2006convexity,zhang2004statistical,ramaswamy2016convex,steinwart2007compare}.
% Consistency is a condition that reasons about loss minimization in expectation over features $x \in \X$ and labels $y \in \Y$.
% However, working with calibration permits us to abstract away features $x \in \X$ and reason about the conditional distribution $\Pr[Y \mid X = x] \in \simplex$.
% \dkt{we should put in a reference that highlights this "conditional" perspective/approach that has been used in the literature.}
% Meaning, given data drawn from a distribution $D$ over $\X\times \Y$, a space of features and labels, we can evaluate for consistency via calibration which only requires examining $p:= D_x = \text{Pr}[Y|X=x]\in \Delta_{\Y}$.

Once a model is optimized wrt. a surrogate $L$, it predicts reports in the surrogate space, $\reals^d$.
Then, to map surrogate reports to discrete labels, the surrogate loss must be paired with a link, $\psi : \reals^d \to \R$.
Intuitively, a surrogate and link pair $(L,\psi)$ are calibrated with respect to a target loss $\ell$, if the optimal expected surrogate loss when making the \emph{incorrect classification} (by $\psi$) is strictly greater than the optimal surrogate loss.

%%%%%%%%%%%%%%%%%%%%

\begin{definition}[$\ell$-Calibrated Loss]
Given discrete loss $\ell : \R\times \Y \to \reals_{+}$, surrogate loss $L :\reals^{d}\times \Y  \to \reals$, and link function $\psi :\reals^{d}\to \R$.
We say that  $(L,\psi )$ is $\ell $-calibrated over $\P \subseteq \simplex$ if, for all $p\in\mathcal{P}$,
$$\inf_{u\in\reals^{d}:\psi (u)\notin \prop [\ell ](p) } \E_{Y\sim p} [L(u,Y)]
 > \inf_{u\in\reals^{d}} \E_{Y\sim p} [L(u,Y)]~.$$
 If $\P$ is not specified, then we are discussing calibration over $\simplex$. 
 In general, when $(L,\psi )$ is $\ell $-calibrated over $\P$ such that $\P \subset \simplex$, we say partial calibration holds with respect to $\P$.
\end{definition}

%%%%%%%%%%%%%%%%%%%%

Our analysis crucially relies on the ability to specify $\P$ when invoking the definition of calibration.
This is because the surrogates we analyze break the $d=n-1$ lower bound on the dimension of any consistent surrogate loss. 
So the surrogates will not be calibrated over the whole simplex $\simplex$.
To aid in our analysis, we use a condition that shows that converging to a property value implies calibration for the target loss itself \citep{agarwal2015consistent}.

%%%%%%%%%%%%%%%%%%%%

\begin{definition}[$\ell $-Calibrated Property]
%\citep{agarwal2015consistent}[Def. 2]
Let $\mathcal{P}\subseteq \Delta_{\Y}$, $\Gamma :\mathcal{P}\toto \reals^{d}$, discrete loss $\ell : \R\times \Y \to \reals_{+}$, and $\psi :\reals^{d}\to \R$. 
We will say $(\Gamma ,\psi)$ is $\ell $-calibrated for all $p\in\mathcal{P}$ and all sequences in $\{u_m\}$ in $\reals^{d}$ if,
$$u_m \to \Gamma (p) \Rightarrow \E_{Y\sim p}[\ell (\psi (u_m), Y)]\to \min_{r\in \R} \E_{Y\sim p} [\ell (r,Y)]~.$$
\end{definition}

%%%%%%%%%%%%%%%%%%%%
%%%%%%%%%%%%%%%%%%%%

%\dk{add in def for strictly proper for a property - tried to add it into def 1}

%\dk{trying to rephrase the following thm so its easier to use}

\begin{theorem}[{\citep[Theorem 3]{agarwal2015consistent}}]\label{thm:conspropcal}
 Let $\ell :\R\times \Y \to \reals_{+}$ and $\P\subseteq \Delta_{\Y}$.
Let $\Gamma :\mathcal{P}\toto \reals^{d}$ and $\psi :\reals^{d}\to \R$ be such that $\Gamma $ is elicitable and $(\Gamma ,\psi )$ is an $\ell $-calibrated property over $\P$.
Let $L:\reals^d \times \Y \to \reals$ be a convex function for all $y\in \Y$ and  strictly proper for $\Gamma$ i.e. $\prop[L] = \Gamma$ and $|\Gamma (p)|=1$ for all $p\in  \P$.
% \jessie{directly elicit?}\dk{we copied the phrase they used for this theorem - we put the def of strictly proper for $\gamma$ at the end of def 1}\jessie{ahh, thanks!}
Then, $(L, \psi)$ is $\ell $-calibrated over $\P$.

%Then $(L,\psi)$ is calibrated with respect to $\ell$. 
%Let $\Gamma$ and $\psi$ be such that $(\Gamma, \psi)$ is $\ell$-calibrated.
%Then if $\Gamma$ is strictly elicited by $L$, $(L, \psi)$ is calibrated with respect to $\ell$.
%Let $\Gamma$ and $\psi$ be such that $(\Gamma, \psi)$ is $\ell$-calibrated.
%Then if $\Gamma$ is strictly elicited by $L$, $(L, \psi)$ is calibrated with respect to $\ell$.
%\dk{Let $\Gamma$ and $\psi$ be such that $(\Gamma, \psi)$ is $\ell$-calibrated.
%Then if $\Gamma$ is strictly elicited by $L$, $(L, \psi)$ is calibrated with respect to $\ell$.}
\end{theorem}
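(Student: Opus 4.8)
The plan is to argue by contradiction, leveraging that strict properness forces a \emph{unique} surrogate minimizer which must coincide with $\Gamma(p)$. Fix $p\in\P$ and write $f_p(u) := \E_{Y\sim p}[L(u,Y)]$. Since $\prop[L]=\Gamma$ and $|\Gamma(p)|=1$, the function $f_p$ has a unique minimizer, the point $u^* := \Gamma(p)$, and $\inf_u f_p(u)=f_p(u^*)$. By the definition of an $\ell$-calibrated loss, it then suffices to show that the competing infimum $\inf\{f_p(u)\mid \psi(u)\notin\prop[\ell](p)\}$ is strictly larger than $f_p(u^*)$ (if that constraint set is empty the infimum is $+\infty$ and the claim is immediate).

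Suppose not, so that $\inf\{f_p(u)\mid \psi(u)\notin\prop[\ell](p)\}=f_p(u^*)$. Then there is a sequence $(u_m)$ with $\psi(u_m)\notin\prop[\ell](p)$ for every $m$ and $f_p(u_m)\to f_p(u^*)=\inf_u f_p(u)$; that is, $(u_m)$ is a minimizing sequence for $f_p$. The technical crux is to upgrade this to $u_m\to u^*$. Here I would invoke the regularity of $L$ in the present setting: each $L(\cdot,y)$ is convex and finite on $\reals^d$, hence continuous, so $f_p$ is a continuous convex function whose set of minimizers is the single point $u^*$; for a closed convex function the recession cone of every nonempty sublevel set equals that of the $\argmin$, so a bounded (indeed singleton) $\argmin$ forces the sublevel sets $\{f_p\le f_p(u^*)+\epsilon\}$ to be bounded, and a minimizing sequence confined to these shrinking bounded sets must converge to the unique minimizer $u^*=\Gamma(p)$. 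I expect this convergence step to be the only place where anything delicate happens; the rest is bookkeeping.

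With $u_m\to\Gamma(p)$ established, apply the hypothesis that $(\Gamma,\psi)$ is $\ell$-calibrated over $\P$: by definition this gives $\E_{Y\sim p}[\ell(\psi(u_m),Y)]\to \min_{r\in\R}\E_{Y\sim p}[\ell(r,Y)]$. But in the settings of interest $\R$ is finite, so the reports $r\notin\prop[\ell](p)=\argmin_{r}\E_{Y\sim p}[\ell(r,Y)]$ have expected $\ell$-loss bounded away from the minimum by a strictly positive gap; since $\psi(u_m)\notin\prop[\ell](p)$ for all $m$, the sequence $\E_{Y\sim p}[\ell(\psi(u_m),Y)]$ cannot converge to that minimum, a contradiction. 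Hence $\inf\{f_p(u)\mid\psi(u)\notin\prop[\ell](p)\}>f_p(u^*)=\inf_u f_p(u)$, which is exactly $\ell$-calibration of $(L,\psi)$ at $p$; as $p\in\P$ was arbitrary, $(L,\psi)$ is $\ell$-calibrated over $\P$. (The directly-elicitable hypothesis on $\Gamma$ is what makes such an $L$ with $\prop[L]=\Gamma$ available in the first place; taking the constant sequence $u_m\equiv\Gamma(p)$ in the calibration of $(\Gamma,\psi)$ also recovers the cleaner consequence $\psi(\Gamma(p))\in\prop[\ell](p)$ noted after the definition.)
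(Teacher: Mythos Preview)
The paper does not supply its own proof of this statement: Theorem~\ref{thm:conspropcal} is quoted from \citet[Theorem~3]{agarwal2015consistent} and used as a black box, so there is no in-paper argument to compare against. I will therefore assess your proposal on its own merits.

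Your overall strategy is sound and is essentially the standard one: pass from a near-optimal surrogate sequence to convergence $u_m\to\Gamma(p)$, then invoke the $\ell$-calibrated property hypothesis to force $\E_p[\ell(\psi(u_m),Y)]\to\min_r\E_p[\ell(r,Y)]$, contradicting the assumption that every $\psi(u_m)$ is suboptimal for $\ell$. The two places where you add hypotheses not present in the theorem statement are worth flagging. First, the convergence step relies on convexity of $L(\cdot,y)$ (so that sublevel sets of $f_p$ are bounded once the argmin is a singleton); the theorem as written only asks that $L$ be strictly proper for $\Gamma$, which by itself does not rule out a nonconvex $f_p$ whose minimizing sequences drift to infinity. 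Second, the contradiction at the end uses that $\R$ is finite to get a strictly positive optimality gap for reports outside $\prop[\ell](p)$. Both assumptions hold throughout the present paper (the induced losses $L^G_\varphi$ are Bregman-based and convex, and $\R=\Y$ is finite), so your argument certainly covers every invocation of the theorem here; just be explicit that you are proving the version with those extra standing assumptions rather than the abstract statement verbatim. The minor point that a bounded minimizing sequence converges to the unique minimizer (not merely subsequentially) follows from the usual ``every subsequence has a further subsequence converging to $u^*$'' device, which you might spell out in one line.
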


%%%%%%%%%%%%%%%%%%%%
%%%%%%%%%%%%%%%%%%%%

%Later on, we shall use a concept called representative sets to prove results pertaining to elicitability and consistency.
%Given that elicitability and consistency pertain to minimizing reports only, working through a set of minimizing reports, which is expressed via representative sets, aids our analysis.

%\begin{definition}[Representative Set]
%Let $\Gamma : \mathcal{P} \toto \R$ where $\mathcal{P}\subseteq\Delta_{\Y}$.
%We say $S\subseteq \R$ is a representative set for $\Gamma$ if  we have $\Gamma (p)\cap S \neq \emptyset$ for all $p\in \mathcal{P}$.
%We further say $S$ is a minimum representative set if it has
%the smallest cardinality among all representative sets. Given a minimizable loss $L:\R\times \Y \to \reals_{+}$ we say $S$ is a (minimum) representative set for L if it is a (minimum) representative set for $\prop [L]$.
%\end{definition}

%\begin{definition}[Embedding]\label{def:embedding}
%A minimizable loss $L: \reals^{d}\to \reals_{+}^{\Y}$ embeds a loss $\ell :\R \to\reals_{+}^{\Y}$ if there exists a representative set $S$ for $\ell$ and in injective embedding $\varphi : S\to \reals^{d}$ such that for all $p\in\Delta_{\Y}$, $r\in S$ we have 
%$$r\in \prop [\ell ](p) \Leftrightarrow \varphi (r)\in  \prop [L](p)~.$$
%\end{definition}

%We denote by $\L_{d}^{cvx}\subseteq \L_{d}$ as the set of convex (in $\reals^{d}$ for every $y\in\Y$) losses in $d$-dimensions.
%Set $\L^{cvx} = \cup_{d\in \mathbb{N}}\L_{d}^{cvx}$.

\noindent Finally, we present the 0-1 loss that we analyze, which is the target loss for multiclass classification.

\begin{definition}[0-1 Loss]\label{def:01_loss}
We denote the 0-1 loss by $\ell_{0-1 }:\Y \times \Y \to \{ 0,1  \}$ such that
 $\ell_{0-1}(y, \hat{y}) := \ones_{y \neq \hat{y}}~$.
Observe $\modeprop (p):=\prop [\ell _{0-1}] (p)=\{ y\in\Y | y \in  \argmax_y p_y   \}$.
\end{definition}

 %\jessiet{note mode = $\argmax_y p_y$\rick{Updated}}. 
 
%For any fixed $\hat{y}\in\Y$, $\prop [\ell _{0-1}]_{\hat{y}}$ can be expressed as 

%$$\conv \Big( \cup_{i=1}^{n-1}\{\frac{1}{i+1}\delta_{\hat{y}}+\frac{1}{i+1}\delta_{\{1\}}+\dots +\frac{1}{1+i}\delta_{\{j\}}\mid [j]\in \binom{\Y \setminus \hat{y}}{i}  \}       \Big)\subset \Delta_{\Y}~.$$

%For brevity, we shall denote the foresaid convex hull by $\zeta_{\hat{y}}$ and will refer to the set of verticies that make up said hull by $\vertex (\zeta_{\hat{y}})$

%%%%%%%%%%%%%%%%%
%%%%%%%%%%%%%%%%%
%\dkt{commented out a paragraph to add to final version}
% Typical consistent approaches with dimension $d = n-1$ learn a score per each outcome (up to an affine shift), then use the argmax link over these scores.
% However, if we reduce the number of dimensions $d \ll n$, then such a link is not directly applicable.
% Therefore, we need to think carefully about not only our choice of surrogate, but also our choice of link function $\psi$.
% \dkt{i think this paragraph should be moved to the next section and shortened}

%%%%%%%%%%%%%%%%%%%%%%%

%%%%%%%%%%%%%%%%%%%%%%%
%%%%%%%%%%%%%%%%%%%%%%%

%%%%%%%%%%%%%%%%%%%%%%%
%%%%%%%%%%%%%%

%%%%%%%%%%%%%%
\section{Polytope Embedding and Existence of Calibrated Regions}\label{sec:calregions}

Often, discrete outcomes are embedded in continuous space onto the vertices of the simplex via one-hot encoding, or the vertices of the unit cube via binary encoding \citep{seger2018investigation}. 
Generalizing, we introduce an approach to surrogate construction inspired by \citet{wainwright2008graphical} and \citet{blondel2020learning} that 
encompasses the aforementioned embedding methods.
This construction utilizes embeddings onto the vertices of arbitrary low-dimensional polytopes $\varphi : \R \to \reals^d$.
Then, an embedding scheme naturally induces a class of loss functions $L^{2}_\varphi$ defined by the embedding, and a link function $\maplink$.

Our analysis begins by defining a condition stronger than inconsistency that arises when embedding into $d < n-1$ dimensions for multiclass classification.
To this end, we introduce the notion of \emph{hallucination} as a means to characterize the ``worst case'' behavior of a surrogate pair (\S~\ref{subsec:hallucination-region}).
In a positive manner, we characterize the \emph{calibration regions} of various embeddings (\S~\ref{subsec:calibration-region}), which are sets $\P \subseteq \simplex$ such that our surrogate and link pair $(L^{2}_\varphi, \maplink)$ are $\ell$-calibrated over $\P$.
We refer the reader to the Appendix \ref{app:strict} for omitted full proofs. 

\subsection{Polytope Embedding Construction}\label{subsec:polytope-embedding}

A Convex Polytope $P\subset \reals^{d}$, or simply a polytope, is the convex hull of a finite number of points $u_1,\dots ,u_n\in\reals^{d}$.
An extreme point of a convex set $A$, is a point $u\in A$ such that if $ u = \lambda y + (1 - \lambda  )z$ with $y,z \in A$ and $\lambda \in [0, 1]$, then $y = u$ and/or $z = u$. 
We shall denote by $\vertex (P)$ a polytope's set of extreme points.
A polytope can be expressed by the convex hull of its extreme points, i.e. $P=\conv (\vertex (P) )$ \citep[Theorem 7.2]{brondsted2012introduction}.
Additional definitions pertaining to polytopes are used for proofs that are omitted to the appendix, we refer the reader to (\S~\ref{app:polyapp}) for said definitions. 

We propose the following embedding procedure that allows one to construct surrogate losses with almost \emph{any} polytope, and \emph{any} Bregman divergence.

\begin{construction}[Polytope Embedding]\label{cvxregpolyembedgen}
Given $\Y$ outcomes,  $|\Y | =n$, choose a polytope 
$P \subset \reals^{d}$ such that $|\vertex (P)|=n$.
Choose a bijection between $\Y$ and $\vertex(P)$.
According to this bijection, assign each vertex a unique outcome so that $\{v_y \in \reals^{d}| y \in \Y\} = \vertex(P)$.
Then the polytope embedding $\varphi: \simplex \to P$ is
$\varphi (p) := \sum_{y \in \Y} p_{y} v_y$, which is the sum of $p$-scaled vectors 
\end{construction}

%%%%%%%%%%%%%%%%%%%%%%%

% We propose the following approach to tie our analysis of polytope embeddings to minimizing loss functions. 
Following the work of \cite{blondel2019structured} and their proposed Projection-based losses, we use Square Losses and a polytope embedding $\varphi$ to define an induced loss $L^{2}_{\varphi}$ (Definition \ref{def:indloss}).

%%%%%%%%%%%%%%%%%%%%%%%

%\begin{definition}[Bregman Divergence]\label{def:bregdef}
%    Given a strictly convex function $G:\reals^{d}\to \reals$, 
%    $ D_G(u, v ) = G(v)-G(u)- \langle dG_{u},v -u \rangle$
%    is a Bregman divergence where $dG_v$ denotes a subgradient of $G$ at $v$.
%For this work, we shall always assume that $\dom %(G)=\reals^{d}$.
%\end{definition}

\begin{definition}[Square Loss]\label{def:sqloss}
    Given a constant $c\in\reals_{>0}$ and a function $f:\mathcal{Y}\to\reals_+$, we define the square loss $L^2:\reals^d\times \mathcal{Y}\to \reals_+$ as $$L^2(u,y) = \frac{1}{c}\|u-y\|^{2}_2+f(y)~.$$
\end{definition}

%%%%%%%%%%%%%%%%%%%%%%%
%Following the work of \cite{grunwald2004game,abernethy2012characterization}, we define the following loss that is defined via a Bregman divergence.

\begin{definition}[$(L^2 ,\varphi )$ Induced Loss]\label{def:indloss}
   Given a square loss $L^2$ and a polytope embedding $\varphi$, we say $(L^2,\varphi  )$ 
   induces a loss $L_{\varphi}^{2}:  \reals^{d}\times \Y \to \reals_{+}$ defined as
   $$ L_{\varphi}^{2}(u, y):= \frac{1}{c}\|u-v_y\|^{2}_2+f(y) ~.$$ 
   Note, that for any fixed $y\in\Y$, $L_{\varphi}^{2}(u, y)$ is convex with respect to $u\in\reals^{d}$. 
\end{definition}

%\rick{Can someone check to make sure the signs are right [ ]}

%%%%%%%%%%%%%%%%%%%%%%%

We show that for any $p\in \Delta_{\Y}$, the report that uniquely minimizes the expectation of the loss $L^{2}_{\varphi}$ is $\varphi(p)$, the embedding point of $p$.
Furthermore, the polytope $P$ contains  all of, and only the minimizing reports in expectation under $L^{2}_{\varphi}$. 

%%%%%%%%%%%%%%%%%%%%%%%

\ignore{\begin{propositionc}\label{thm:embedmin}\end{propositionc}}\begin{restatable}{propositionc}{embedmin}\label{thm:embedmin}
For a given induced loss $L_{\varphi}^{2}$, the unique report which minimizes the expected loss is $ u^{*} := \argmin_{u\in\reals^{d}} \E_{Y\sim p}[L_{\varphi}^{2}(u,Y) ] = \varphi (p) $ such that $u^{*}\in P$. 
Furthermore, every $\hat{u}\in P$ is a minimizer of $\E_{Y\sim\hat{p}} [L_{\varphi}^{2}(u,Y)]$ for some $\hat{p}\in\Delta_{\Y}$.
\end{restatable}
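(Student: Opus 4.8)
The plan is to reduce the proposition to two elementary properties of the Bregman divergence of a strictly convex $G$ — that $D_G \ge 0$ everywhere and that it vanishes only on the diagonal — after rewriting the expected surrogate loss in closed form. Concretely, fix $p \in \simplex$ and expand $L^{G}_{\varphi}(u,Y)$ via Definitions~\ref{def:bregdef} and~\ref{def:indloss}; every summand is affine in the embedded vertex $v_Y$, so by linearity of expectation we may substitute $\E_{Y\sim p}[v_Y] = \sum_{y\in\Y} p_y v_y = \varphi(p)$ directly. A short computation then gives, up to an additive constant $c_p$ independent of $u$,
\[
\E_{Y\sim p}\bigl[L^{G}_{\varphi}(u,Y)\bigr] \;=\; D_G\bigl(\varphi(p),\,u\bigr) \;+\; c_p, \qquad c_p \;=\; \sum_{y\in\Y} p_y\, G(v_y) - G\bigl(\varphi(p)\bigr);
\]
equivalently this is the Jensen-gap identity $\E[D_G(V,u)] = D_G(\E V,u) + \E[D_G(V,\E V)]$ applied to $V = v_Y$.

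From here I would read off the unique minimizer. Convexity of $G$ (the supporting-hyperplane inequality underlying the Bregman divergence) gives $D_G(\varphi(p),u) \ge 0$, hence $\E_{Y\sim p}[L^{G}_{\varphi}(u,Y)] \ge c_p$ for every $u$, with equality at $u=\varphi(p)$. To see this is the \emph{only} minimizer I would argue $D_G(\varphi(p),u)=0 \Rightarrow u=\varphi(p)$: restricting $G$ to the segment between $u$ and $\varphi(p)$ yields a one-dimensional strictly convex function lying strictly above its tangent at $u$ unless the segment degenerates to a point, so the supporting hyperplane of $G$ at $u$ can meet the graph of $G$ at $\varphi(p)$ only when $u=\varphi(p)$. (Alternatively, one may invoke the standard fact that the mean uniquely minimizes expected Bregman divergence.) Hence $u^*=\varphi(p)$ is the unique minimizer, and since $u^*=\sum_{y\in\Y}p_y v_y$ is a convex combination of the vertices of $P$, we get $u^*\in\conv(\vertex(P))=P$.

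The ``furthermore'' claim then follows from surjectivity of $\varphi$ onto $P$: since $P=\conv(\vertex(P))=\conv\{v_y:y\in\Y\}$, any $\hat u\in P$ can be written as $\hat u=\sum_{y\in\Y}\hat p_y v_y$ with $\hat p\in\simplex$, i.e.\ $\hat u=\varphi(\hat p)$, so by the first part $\hat u$ is the (unique) minimizer of $\E_{Y\sim\hat p}[L^{G}_{\varphi}(\cdot,Y)]$. The computations are routine; the only delicate point is the strict-convexity argument that a supporting hyperplane of a strictly convex function touches its graph at a single point, which is handled cleanly by the line-segment restriction above. A minor preliminary is to confirm that the orientation of $D_G$ in Definition~\ref{def:indloss} places the optimization variable $u$ in the slot for which the expected loss is bounded below, so that the $\argmin$ is well posed and attained.
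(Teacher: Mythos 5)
Your proposal is correct and follows essentially the same route as the paper: the paper simply cites the standard fact that the mean uniquely minimizes the expected Bregman divergence (Banerjee et al.\ 2005, Theorem~1) and then uses surjectivity of $\varphi$ onto $P=\conv(\vertex(P))$ for the converse, exactly as you do. Your only departure is that you derive that key fact in-line via the Jensen-gap decomposition rather than citing it, and your caveat about the orientation of the arguments of $D_G$ in Definition~\ref{def:indloss} is well taken, since the paper's stated formula only yields the claimed minimizer under the standard orientation.
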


%%%%%%%%%%%%%%%%%%%%%%%

%%%%%%%%%%%%%%%%%%%%%%%
We now define the maximum a posteriori (MAP) link, which will be used in conjunction with an induced loss $L^{2}_{\varphi}$ to form a surrogate pair for the 0-1 loss.
The MAP link projects surrogate predictions onto the polytope $P$, then links to the nearest vertex of $P$, and is commonly used in the literature~\citep{tsochantaridis2005large,blondel2019structured,xue2016solving}.
Since the MAP link performs a projection, one may ask if this is computationally challenging; fortunately this operation is computationally feasible due to the convexity of the polytope~\citep{blondel2019structured}.

% The MAP Link is a maximum a posteriori inference task which is seen often throughout machine learning \citep{tsochantaridis2005large,blondel2019structured,xue2016solving}.

\begin{definition}[MAP Link]
Let $\varphi$ be a polytope embedding from $\Delta_{\Y}$ to $P$. The MAP link $\maplink  : \reals^{d} \to \Y$ is defined as
$\maplink (u) = \argmin_{y\in \Y   }  ||\text{Proj}_{P}(u) -v_y||_{2}$.
The level set of the link for $y$ is $\maplinklevel =\{ u\in\reals^{d}|y =  \maplink  (u) \}$.
We break ties arbitrarily but deterministically.
\end{definition}

%%%%%%%%%%%%%%

%%%%%%%%%%%%%%
\subsection{Hallucination Regions}\label{subsec:hallucination-region}
%\dkt{i think we should cut the first sentence and improve the first paragraph of S3 to talk about the outline of the section}\jessiet{When more in depth at the beginning of section 3. I think we can comment this first section out now.}
%We shall later show that our proposed polytope embedding in conjunction with an induced surrogate loss is calibrated for the 0-1 loss for some $\P \subset \Delta_{\Y}$ that is full-dimensional. 
Since our polytope embedding violates surrogate dimension bounds, calibration for 0-1 loss will not hold for all distributions.
In particular, we show there always exists some distribution $p$ such that $p_y = 0$ yet $\E_{Y \sim p}[L_{\varphi}^2 (u,Y)]$ is minimized at some $u$ such that $\psi^\varphi(u) = y$.
This implies a ``worst case'' inconsistency where the reported outcome could never actually occur with respect to our embedding of $n$ events via $\varphi$ into $\vertex (P)$.
% Following the phenomenon of reporting untrue statements from the literature of Large Language Models ~\citep{ji2023survey}, we name this act of reporting an outcome with no probability to be a hallucination.

\begin{definition}[Hallucination]\label{def:hallucination}
Given $(L, \psi )$ such that $L :\reals^{d} \times \Y \to \reals_{+}$, $|\Y |=n$, $d<n$, and $\psi:\reals^{d}\to \Y$, we say that a hallucination occurs at a surrogate report $u\in\reals^{d}$ if, for some $p\in\Delta_{\Y}$, $u \in \argmin_{\hat{u}\in\reals^d} \E_{\Y\sim p} [L(\hat{u}, Y )]$ and $\psi(u) := y$ but $p_y = 0$.
%if a reported $u\in\reals^{d}$ is linked, $\psi (u)=r$, such that there exists $p\notin \gamma_{r}$ where $p_r =0$ and $u \in \argmin_{\hat{u}\in\reals^d} \E_{\Y\sim p} [L(\hat{u}, Y )]$.
%\dk{suggestion: at a surrogate report $u\in\reals^{d}$ if, for some $p$, $u \in \argmin_{\hat{u}\in\reals^d} \E_{\Y\sim p} [L(\hat{u}, Y )]$ and $\psi(u) = r$ but $p_r = 0$.}
We denote by $\H \subseteq P \subset \reals^{d}$ as the \textit{hallucination region} as the elements of $P$ at which hallucinations can occur.
\end{definition}

%%%%%%%%%%%%%%%%%%%%%%%%%%

\begin{figure}[t]
\centering

\begin{minipage}[t]{0.05\textwidth}
	\centering
\end{minipage}
\hfill
\begin{minipage}[t]{0.35\textwidth}
	\centering
	\includegraphics[width=.9\textwidth]{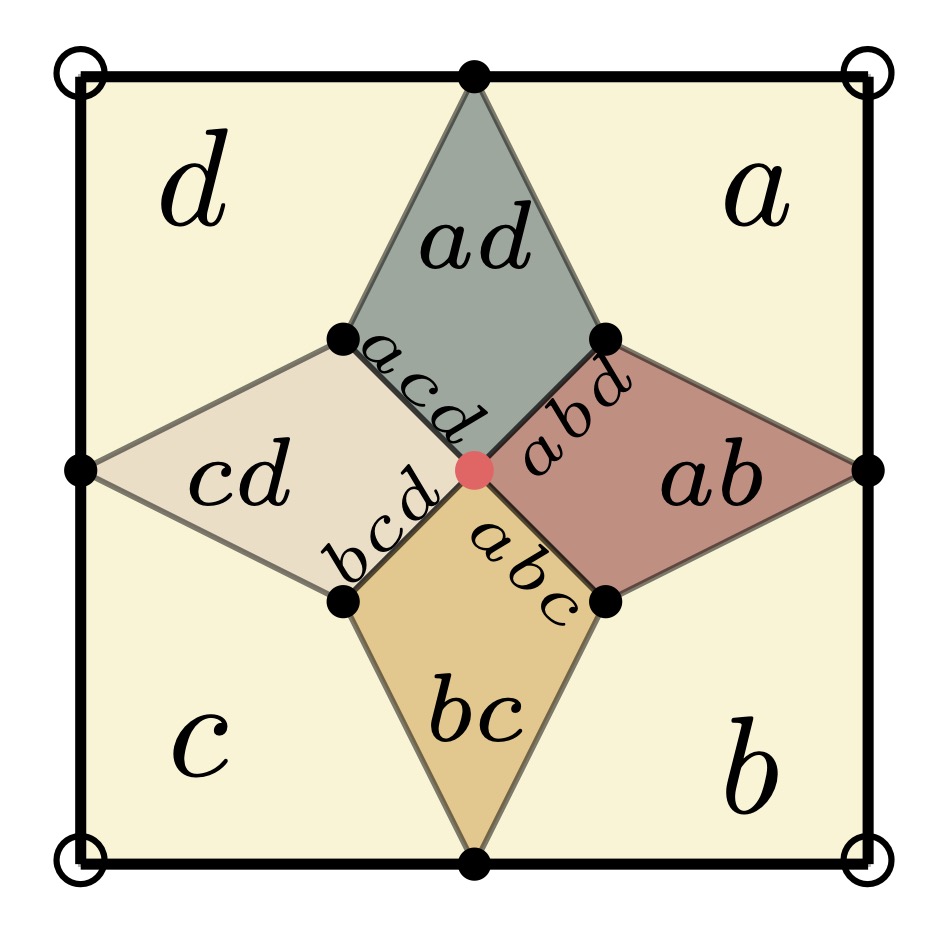}
\end{minipage}
\hfill
%\captionsetup{justification=centering}
  \begin{minipage}[t]{0.45\textwidth}
    \includegraphics[width=.9\textwidth]{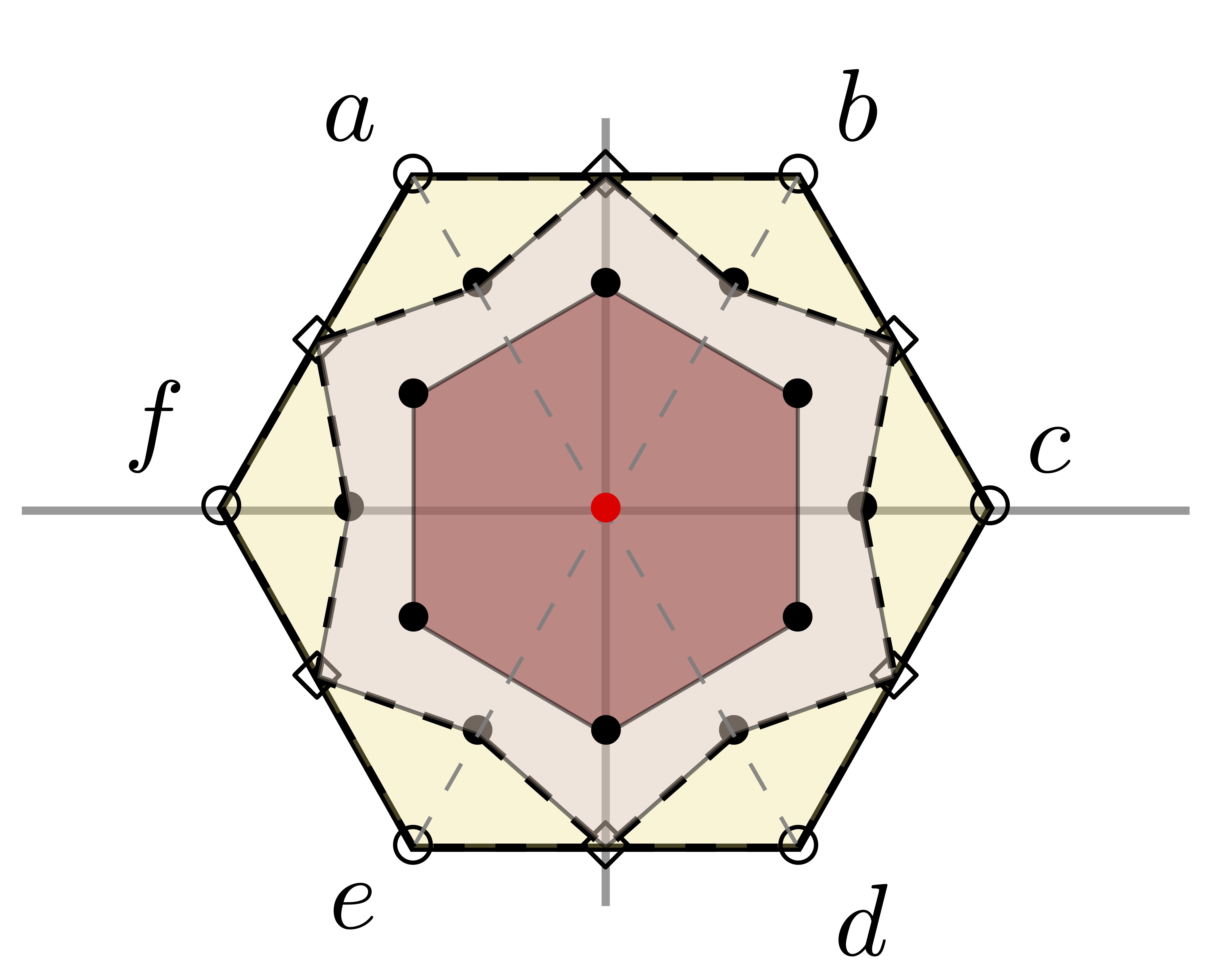}
  \end{minipage}
\begin{minipage}[t]{0.1\textwidth}
	\centering
\end{minipage}

  \caption{(Left) Mode level sets of $\Delta_{\Y}$ where $\Y =\{a,b,c,d\}$ embedded into a two dimensional unit cube. The center red point denotes the origin $(0,0)$ which is the hallucination region. (Right) An embedding of $\Delta_{\Y}$ where $\Y =\{a,b,c,d,e,f\}$ into a three-dimensional permutahedron: the beige region expresses strict calibration regions, the light pink regions expresses regions with inconsistency, and the auburn region expresses regions with hallucinations. For example, consider the report $u = \vec 0$. Since losses are convex, if $p = (0, \frac 1 2, 0, 0, \frac 1 2, 0)$, then $\conv(\{b,e\})$ (dashed grey) is optimal, which includes $u$. However, $\vec 0$ is also contained in $\conv(\{a,d\})$ which is optimal for the distribution $p' = (\frac 1 2, 0, 0, \frac 1 2, 0, 0)$. Therefore, we cannot distinguish the optimal reports for a hallucination at $\vec 0$.} 
\label{fig:example-plots}
\end{figure}

%%%%%%%%%%%%%%%%%%%%%%%%%%%

%The definition of hallucination violates the definition of Indirect Elicitation (Def. \ref{def:indirectelic}), which is a necessary condition for calibration ~\citep{agarwal2015consistent, finocchiaro2021unifying, steinwart2008support}.
 We express the subspace of the surrogate space where hallucinations can occur as the hallucination region denoted by $\H$.
In Theorem \ref{thm:halreal}, we characterize the hallucination region for any polytope embedding while using the surrogate pair $(L^{2}_{\varphi} ,\maplink )$ and show that $\H$ is never empty.

%%%%%%%%%%%%%%%%%%%%%%
%%%%%%%%%%%%%%%%%%%%%%

%%%%%%%%%%%%%%%%%%%%%%
%%%%%%%%%%%%%%%%%%%%%%

\begin{restatable}{theoremc}{halreal}
  \label{thm:halreal}
For any given pair $(L^{2}_{\varphi} ,\maplink )$ and $\ell_{0-1}$ with embedding dimension $d < n-1$;    it holds that 
$\H= \cup_{y\in\Y }\conv (\vertex (P)\setminus \{v_y\} )\cap \maplinklevel$
and furthermore $\H \neq \emptyset$.
\end{restatable}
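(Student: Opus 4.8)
The plan is to prove the set identity by unpacking Definition~\ref{def:hallucination} with Proposition~\ref{thm:embedmin} in hand, and then to prove non-emptiness by an affine-dependence (Radon-type) argument that uses the hypothesis $d < n-1$ directly. Throughout, note that by Proposition~\ref{thm:embedmin} every minimizer of $u\mapsto\E_{Y\sim p}[L^{G}_{\varphi}(u,Y)]$ lies in $P$ and equals $\varphi(p)$; so points outside $P$ can never be hallucination points, and ``$u$ is the (unique) minimizer for $p$'' is literally ``$\varphi(p)=u$''.

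\emph{Characterization.} Fix $u\in P$ and write $r:=\maplink(u)$. By the above, a hallucination occurs at $u$ exactly when there is some $p\in\simplex$ with $\varphi(p)=u$ and $p_{r}=0$. Since $\varphi(p)=\sum_{y}p_{y}v_{y}$, having such a $p$ with $p_{r}=0$ is equivalent to writing $u$ as a convex combination of $\{v_{y}:y\neq r\}$, i.e.\ $u\in\conv(\vertex(P)\setminus\{v_{r}\})$: the coefficients of $p$ on $\Y\setminus\{r\}$ give one direction, and conversely any convex representation of $u$ over $\vertex(P)\setminus\{v_{r}\}$ defines a valid distribution $p\in\simplex$ with $p_{r}=0$ and $\varphi(p)=u$ in the other. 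Combining this with ``$r=\maplink(u)$'' $\iff$ ``$u\in\maplink_{r}$'' gives $u\in\H \iff u\in\conv(\vertex(P)\setminus\{v_{r}\})\cap\maplink_{r}$ with $r=\maplink(u)$. Taking the union over $y\in\Y$ recovers the stated formula: the displayed set contains $\H$ since only the term $y=\maplink(u)$ is relevant to any given $u$ (the link is a function), and conversely any $u$ in the term indexed by $y$ satisfies $\maplink(u)=y$ and $u\in\conv(\vertex(P)\setminus\{v_{y}\})$, hence $u\in\H$.

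\emph{Non-emptiness.} Because $d<n-1$, the $n$ vertices $v_{1},\dots,v_{n}\in\reals^{d}$ are affinely dependent, so there exist scalars $\mu_{y}$, not all zero, with $\sum_{y}\mu_{y}=0$ and $\sum_{y}\mu_{y}v_{y}=0$. Let $A=\{y:\mu_{y}>0\}$, $B=\{y:\mu_{y}<0\}$ (both nonempty), and $M=\sum_{y\in A}\mu_{y}>0$; then $u:=\sum_{y\in A}(\mu_{y}/M)v_{y}=\sum_{y\in B}(-\mu_{y}/M)v_{y}\in P$. The crucial point is that these two representations together witness $u\in\conv(\vertex(P)\setminus\{v_{y}\})$ for \emph{every} $y\in\Y$ at once: the $A$-representation avoids each $v_{y}$ with $\mu_{y}\le 0$, and the $B$-representation avoids each $v_{y}$ with $\mu_{y}\ge 0$. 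In particular $u\in\conv(\vertex(P)\setminus\{v_{\maplink(u)}\})$ no matter how the link's ties are broken, so the characterization gives $u\in\H$, whence $\H\neq\emptyset$. (This is exactly Radon's theorem applied to $d+2$ of the vertices, which exist precisely because $d<n-1$.)

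I do not expect a genuine obstacle here; the only care required is bookkeeping. On the characterization side one must keep straight that ``minimizer'' in Definition~\ref{def:hallucination} refers to the unique point supplied by Proposition~\ref{thm:embedmin}, that points outside $P$ are automatically excluded, and that a convex-combination witness translates cleanly into a member of $\simplex$. On the non-emptiness side, the one thing worth emphasizing is that the Radon point simultaneously avoids all vertices, so we never have to reason about, or control, the link's arbitrary tie-breaking policy.
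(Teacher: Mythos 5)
Your characterization of $\H$ is correct and matches the paper's in substance: both directions reduce, via Proposition~\ref{thm:embedmin}, to the observation that a distribution $p$ with $\varphi(p)=u$ and $p_{\maplink(u)}=0$ exists iff $u\in\conv(\vertex(P)\setminus\{v_{\maplink(u)}\})$. If anything your version is cleaner: the paper proves the reverse inclusion by contradiction with an informal ``vertex figure'' argument, whereas you state the equivalence directly, and you correctly note that since $\maplink$ is a (deterministically tie-broken) function, only the term indexed by $y=\maplink(u)$ in the union can capture $u$. Where you genuinely diverge is non-emptiness. The paper applies Helly's theorem to the family $\{\conv(\vertex(P)\setminus\{v_y\})\}_{y\in\Y}$: any $d+1$ of these sets share a vertex because $n-(d+1)>0$, so Helly yields a point $u'$ in the intersection of all of them, which then lies in $\H$ whatever $\maplink(u')$ is. You instead invoke affine dependence of the $n\ge d+2$ vertices and take the Radon point $u=\sum_{y\in A}(\mu_y/M)v_y=\sum_{y\in B}(-\mu_y/M)v_y$; since $A$ and $B$ are disjoint, the two representations jointly avoid every vertex, so $u$ lies in the same full intersection. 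Both arguments are valid and land on the same witness set $\bigcap_{y\in\Y}\conv(\vertex(P)\setminus\{v_y\})$; yours is more elementary and constructive (it exhibits the point explicitly and makes transparent why tie-breaking in the link is irrelevant), while the paper's Helly route black-boxes the existence step behind a standard convexity theorem. No gap in either part.
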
\vspace{-.4cm}
\begin{proof}[Sketch]
Fix $y\in\Y$.
We abuse notation and write $\vertex (P_{-y}) := \vertex (P)\setminus \{v_y\}$.
Observe $\conv (\vertex (P_{-y}))\cap \maplinklevel\subseteq \H$ since any point in this set can be expressed as a convex combination without needing vertex $v_{y}$ implying there is a distribution embedded by $\varphi$ to said point which has no weight on $y$.
We seek to show that $\H \subseteq \cup_{y\in\Y } \conv (\vertex (P_{-y}))\cap \maplinklevel$.
Assume there exists a point $u\notin \conv (\vertex (P)\setminus v_y)\cap \maplinklevel$ such that there exists some $p\in\Delta_{\Y}$ where $\varphi (p)=u$, $p_y =0$, and $\maplink (u)=y$.
Since $\maplink (u)=y$ and $u\notin \conv (\vertex (P_{-y}))\cap \maplink_{y}$, it must be the case that $u\notin \conv (\vertex (P_{-y}))$.
However, that implies that $u$ is strictly in the vertex figure and thus must have weight on the coefficient for $y$.
Thus, forming a contradiction that $p_y=0$ which implies that $\H \subseteq \cup_{y\in\Y } \conv (\vertex (P_{-y}))\cap \maplinklevel$.
Finally, using Helly's Theorem \citep[Corollary 21.3.2]{rockafellar1997convex} we show that $\cap_{y\in\Y} \conv (\vertex (P) \setminus v_y) \neq \emptyset$, which implies the non-emptiness of $\H$ as well.
\end{proof}

%%%%%%%%%%%%%%%%%%%%%%

%%%%%%%%%%%%%%%%%%%%%%%
%\rick{Example of hal for unit cube}\newline 
%We define  a unit cube in $d$-dimensions by $P^{\square} := \conv (\{-1,1\}^{d})$.
%When $d\geq 2$, the hallucination region of $P^{\square}$ is $\conv (\{\pm \frac{1}{d-2},0 \}^{d})$ given that each hyperplane which goes through a vertex figure for some vertex $v$ is $\langle v,u  \rangle \leq \frac{1}{d-2}$
%A cross polytope is $P^{\oplus} := \conv (\pm \ones_{1},\dots , \pm \ones_{d} )$.
%Observe, $\conv (\{\pm \frac{1}{d-2},0 \}^{d}) \subseteq P^{\oplus} \subseteq P^{\square}$.
%The volume of a cross polytope is $\frac{2^d}{d!}$.
%Hence, as $d\to \infty$, the volume of the cross hallucination region approaches zero while the volume of the unit cube is $2^d$.

%%%%%%%%%%%%%%%%%%%%%%%

Theorem~\ref{thm:halreal} suggests that using machine learning in high-risk settings such as medical and legal applications while violating the known $n-1$ dimensional bound for surrogate losses in multiclass classification is inherently ill-advised without human intervention given the possibility for hallucinations. 
Furthermore, hallucinations may be forced by the target loss, as in the case of Hamming loss (see Appendix \ref{app:hal}).
In these cases practitioners should carefully consider the choice of target loss.
We conjecture that hallucinations are common for many structured prediction losses.
However this is not a concern in our primary loss of study of multi-class classification.

%%%%%%%%%%%%%%%%%%%%%%%

%%%%%%%%%%%%%%%%%%%%
%%%%%%%%%%%%%%%%%%%%

\subsection{Calibration Regions}\label{subsec:calibration-region}
Ideally, we would like calibration to hold over the entire simplex since that would imply minimizing surrogate risk would always correspond to solving the target problem regardless of the true underlying distribution.
%Observe the level sets of $\modeprop$ are polytopes and are finitely many.
We observe that the mode's embedded level sets in the polytope overlap (see Figure~\ref{fig:example-plots}L), which is unsurprising given that we are violating the lower bounds on surrogate prediction for the mode and hence calibration does not hold over the entire simplex.
Since $|2^{\Y}\setminus \{ \emptyset \} |$ is a finite set, we know that the number of unique mode level sets is finite. 
Although every point in the polytope is a minimizing report for some distribution, if multiple distributions with non-intersecting mode sets are embedded to the same point, there is no way to define a link function that is correct in all cases. 
However, if the union 
of mode sets for the $p$'s mapped to any $u\in P$ is a singleton, regardless of the underlying distribution\footnote{We leave the more general case of linking $u$ when $\bigcap_{p \in \varphi^{-1}(u)} \gamma(p) \neq \varnothing$ to future work.}, a link $\psi$ would be calibrated over the union if it mapped $u$ to the mentioned singleton. 
Given $(L , \psi )$, $\varphi$, and a target loss $\ell$, we define strict calibrated regions as the points for which calibration holds regardless of the actual distribution realized, which are possible at said points.
%We define points whose union of embedded mode sets there exists a link which allows for calibration regardless of the true distribution via the concept of strict calibrated regions.

\begin{definition}[Strict Calibrated Region]\label{def:scr}
Suppose we are given $(L , \psi )$, $\varphi$, and a target loss $\ell$. We say $R \subseteq P$ is a \emph{strict calibrated region} via $(L , \psi )$ with respect to $\ell$ if $(L , \psi )$ is $\ell $-calibrated for all $p\in \varphi^{-1}(R) := \{p\in\Delta_{\Y} : \varphi(p) \in R\}$. %note this is equivalent to the lower inverse

For any $y\in\Y$, we define $R_y\coloneqq R \cap \psi_y $.
We let $R_{\Y} :=\cup_{y\in\Y}R_y$.
\end{definition}

%%%%%%%%%%%%%%%%%%%%%%%
%%%%%%%%%%%%%%%%%%%%%%%
By violating lower bounds, we are in a partially consistent paradigm where surrogate reports do not necessarily correspond to a unique distribution $p$.
However, strict calibration regions allow us to check whether or not the loss is calibrated for the distribution $p$ generating the data --- even without explicit access to $p$. 
One simply has to check whether the report $u$ is in $R_{\Y}$.

% Observe, that strict calibration regions are not dependent on any assumed prior $\P\subseteq \Delta_{\Y}$.
% Hence, one can be agnostic to the true $\P\subseteq \Delta_{\Y}$ and deterministically verify if they are linking to an expected minimizer by checking if prediction lies within $R_{\Y}$ thus motivating knowing whether $R_{\Y}$ regions exists and characterizing them in closed form if they do exists. 

In Theorem \ref{thm:consvert}, regardless of one's chosen $P$, we show that there always exists a non-zero Lebesgue measurable strict calibration region and that $(L^{2}_{\varphi},\maplink)$ is calibrated for the 0-1 loss overall distributions embedded into the strict calibration region. 
This result shows that our surrogate and link construction for \emph{any $d$}, always yields discernible calibration regions --- lending support to the practical use and study of these surrogates.

%We show for each $v_y \in\vertex (P)$ there exists an epsilon ball $B_{\epsilon}(v_y)$, such that $B_{\epsilon}(v_y)\cap P\subseteq R_{y}$.
% Hence, we show $(L_{\varphi}^{G}, \maplink )$ is $\ell_{0-1}$-calibrated for at least the distributions $\P =\cup_{v_y\in\vertex (P)}\varphi^{-1}(B_{\epsilon}(v_y)\cap P)$, therefore $\varphi (\P)\subseteq R_{\Y}$.

\begin{restatable}{theoremc}{consvert}
  \label{thm:consvert}
    Let $L^2$ be a Square Loss, $\varphi$ be any polytope embedding, $\maplink$ be the MAP link, and $L_{\varphi}^{2}$ be the loss induced by $(L^2, \varphi)$.
    There exists a $\P \subseteq \Delta_{\Y}$ with non-zero Lebesgue measure such that $\varphi (\P)$ is a strict calibration region via $(L_{\varphi}^{2}, \maplink )$ with respect to $\ell_{0-1}$.
\end{restatable}

\begin{proof}
Recall that $\modeprop (p) := \prop [\ell_{0-1}](p)=\text{mode}(p)$.
By Lemma \ref{lemma:bdpw}, it can be inferred that for any $y\in\Y$ it holds that $\conv (\{v_y \}\cup m_{v_y,\alpha})\subseteq \varphi (\modeprop_y)$ where $m_{v_y,\alpha }:= \{(1-\alpha )v_y+\alpha \overline{v}\mid \overline{v}\in \neigh (v_y) \}$ defined by any $\alpha \in (0,.5)$.
We seek to show that there exists an open ball at $v_y$ such that $(B_{\epsilon}(v_y)\cap P)\cap \varphi (\modeprop_{\hat{y}})=\emptyset$ for all $\hat{y}\neq y$ and $B_{\epsilon}(v_y)\cap P\subset \conv (\{v_y \}\cup m_{v_y,\alpha})\subseteq \varphi (\modeprop_y)$.

Fix $y\in\Y$.
For contradiction, assume for any $\hat{y}\in\Y$ where $ \hat{y}\neq y$, it holds that $B_{\epsilon}(v_y) \cap \varphi (\modeprop_{\hat{y}})\neq \emptyset$ for all $\epsilon >0$.
Observe via $\varphi$ only $\delta_y\in \Delta_\Y$ is embedded at vertex $v_y$.
Since $\modeprop_{\hat{y}}\subset \Delta_\Y$ is closed and convex, by Lemma \ref{lem:polytopoly}, $\varphi (\modeprop_{\hat{y}})\subset P$ is closed and convex and has no intersection with $v_y$. 
Thus, with respect to every $\varphi (\modeprop_{\hat{y}})$ there must exist some non-zero min distance between $v_y$, which we shall denote by $d_{v_y}$. 
For any $\epsilon \in (0,d_{v_y})$, we can define $B_{\epsilon}(v_y)$ such that $B_{\epsilon}(v_y)\cap \varphi (\modeprop_{\hat{y}}) = \emptyset$, forming a contradiction.
Furthermore, by definition of $\conv (\{v_y \}\cup m_{v_y,\alpha})$, we can define a $d_{v_y}'\in (0,d_{v_y})$ such that for any $\epsilon \in (0,\min{(d_{v_y},d_{v_y}')})$
both $B_{\epsilon}(v_y)\cap P\subset \conv (\{v_y \}\cup m_{v_y,\alpha})\subseteq \varphi (\modeprop_y)$ and $(B_{\epsilon}(v_y)\cap P)\cap \varphi (\modeprop_{\hat{y}})=\emptyset$ for all $\hat{y}\neq y$ holds true. 

For each $v_y\in\vertex (P)$ define a $d_{v_y}$ and $d_{v_y}'$ and let $\epsilon' \in \cap_{v_y\in\vertex (P)} (0,\min{(d_{v_y},d_{v_y}')})$.
By the construction of $P$ and the definition of $\maplink$, there exists a $\epsilon''>0$ such that for all $u\in B_{\epsilon''}(v_y)$ it holds that $\psi (u)=y$ and $B_{\epsilon''}(v_y)\subset \maplink_{y}$ .
By the construction of our epsilon ball, for any $y\in\Y$, we claim that $B_{\min \{ \epsilon ',\epsilon ''\} }(v_y)\cap P$ defines a strict calibration region such that $\varphi^{-1}(B_{\min \{\epsilon ',\epsilon ''\} }(v_y)\cap P)$ is a set of distributions for which calibration holds.

Fix $y\in\Y$.
For $p\in \Delta_{\Y}$, suppose a sequence $\{u_m\}$ converges to $\prop [L_{\varphi}^{2}] (p) = \varphi (p)\in B_{\min \{ \epsilon ',\epsilon ''\} }(v_y)\cap P$ (equality by Proposition \ref{thm:embedmin}).
By construction of $B_{\min \{ \epsilon ',\epsilon ''\} }(v_y)\cap P$, $\varphi(p)\in \varphi (\modeprop_{y})$ and $\varphi(p)\notin \varphi (\modeprop_{\hat{y}})$ for all $\hat{y}\neq y$ and thus, $y$ is a minimizing report for $\ell_{0-1} (y;p)$.
Furthermore, since $ B_{\min \{ \epsilon ',\epsilon ''\} }(v_y)\subset \maplink_{y}$, all elements within $B_{\min \{ \epsilon ',\epsilon ''\} }(v_y)$ link to $y$.
Since $\{u_m\}$ converges to $\prop [L_{\varphi}^{2}] (p)$, there exists some $N\in \mathbb{N}$ and $n\geq N$, such that $\|u_n -  \varphi (p)   \|_{2}< \min \{ \epsilon ',\epsilon ''\} $, meaning that  $\E_{\Y\sim p}[\ell_{0-1} (\maplink (u_m), Y)]\to \min_{y\in\Y }\E_{\Y\sim p}[\ell_{0-1} (y, Y)]$.
Hence, for any $v_y\in\vertex (P)$, $(\prop [L_{\varphi}^{2}],\maplink )$ is $\ell_{0-1}$-calibrated property with respect to $\varphi^{-1}(B_{\min \{ \epsilon ',\epsilon ''\}}(v_y)\cap P)$.
Furthermore, by the construction of $B_{\min {\{\epsilon',\epsilon''\}}}(v_y)$ for each $v_y \in \vertex (P)$, we have that $L_{\varphi}^{2}$ is strictly proper for $\prop [L_{\varphi}^{2}]$.
Thus, by Theorem \ref{thm:conspropcal},  $(L_{\varphi}^{2}, \maplink )$ is $\ell_{0-1}$-calibrated for at least the distributions $\P =\cup_{v_y\in\vertex (P)} \varphi^{-1}(B_{\min \{ \epsilon ',\epsilon ''\} }(v_y)\cap P) $ as well as $\varphi (\P)$ is a strict calibration region.
Furthermore, since $B_{\min \{\epsilon ',\epsilon ''\}}$ for each $v_y\in\vertex (P)$ is non-empty, we have that $ \P \neq\emptyset$.
\end{proof}

Although strict calibration regions $R_y$ exist for each outcome $y\in\Y$ via the polytope embedding, tightly characterizing strict calibration regions is non-trivial.
Since the level sets of elicitable properties are convex within the underlying simplex, characterizing the strict calibration regions becomes a collision detection problem, which is often computationally hard.

%\dkt{rick: work on why theorem is informative to the literature}
%\dk{Our result suggests that the $(L_{\varphi}^{G}, \maplink)$ surrogate-link pair construction can yield a wide class of useful surrogates.
%Furthermore, for particular surrogates with nice properties, computing the exact strict calibration regions may be computationally feasible.
%Our next section refines our analysis to explicitly give the strict calibration region for certain surrogates.
%Our analysis explains the success of binary encoding.}
%%%%%%%%%%%%%%%%%%%%

%%%%%%%%%%%%%%%%%%%%%%

%%%%%%%%%%%%%%%%%%%%%%%

%%%%%%%%%%%%%%%%%%%%%%%

%%%%%%%%%%%%%%

%%%%%%%%%%%%%%

\section{Restoring Inconsistent Surrogates via Low-Noise Assumptions}\label{sec:lownoise}
Looking towards application, we refine our results on the existence of strict calibration regions by examining a low-noise assumption, which provides an interpretable calibration region (\S~\ref{subsec:calib-low-noise}). 
We show which low-noise assumptions imply calibration when embedding $2^d$ outcomes into $d$ dimensions and $d!$ outcomes into $d$ dimensions (\S~\ref{subsec:permutahedron}). 
We refer the reader to Appendix \ref{app:strict} for omitted proofs.

%%%%%%%%%%%%%%%%%%%%%%%%%%%%%%%
\subsection{Calibration via Low Noise Assumptions}\label{subsec:calib-low-noise}
%%%%%%%%%%%%%%%%%%%%%%%%%%%%%%%

We demonstrate that every polytope embedding leads to calibration under some low-noise assumption. 
Our results enable practictioners to choose the dimension $d$, unlike in previous works.
 Following previous work \citep{agarwal2015consistent}, we define a low noise assumption to be a subset of the probability simplex on the label distribution parameterized by $\hat \alpha$: $\Theta_{\hat{\alpha}} =\{p\in \Delta_{\Y}\mid \max_{y\in \Y}p_y \geq 1-\hat{\alpha} \}$ where $\hat{\alpha}\in [0,1]$.
This noise assumption can be understood as Massart noise \citep{massart2006risk} in the multiclass setting.

Given $\alpha \in [0,1]$ and $y\in \Y$, we define the set $\Psi^{y}_{\alpha}=\{ (1-\alpha )\delta_y +\alpha \delta_{\hat{y}} \mid \hat{y}\in \Y  \}$.
With an embedding $\varphi$ onto $P$, we define the set $P_{\alpha}^{y}:=\varphi (\conv (\Psi^{y}_{\alpha}))$, a scaled version of $P$ anchored at $v_y$, that moves vertices $(1-\alpha)$ proportionally towards $y$, (Figure~\ref{fig:noise-plots}R). 

%\dk{i think it might be easier to follow the math if we switch things around to $\alpha\delta_y + (1-\alpha) \delta_{\hat{y}}$ and $\alpha' \in [0.5,1)$? not a huge deal}

%%%%%%%%%%%%%%%%%%%%%%%%%%%%%%%
%%%%%%%%%%%%%%%%%%%%%%%%%%%%%%%

\begin{theorem}\label{thm:lownoisexists}
 Let $L^2$ be a Square Loss, $\varphi$ be any polytope embedding, and $L^{2}_{\varphi}$ be the loss induced by $(L^2, \varphi )$.
There exists an $\alpha \in [0,.5)$ such that for the link 
$\maplinknoise (u) = \argmin_{y\in \Y   }  \|u-P^{y}_{\alpha}\|_{2},$
$(L^{2}_{\varphi},\maplinknoise )$ is $\ell_{0-1}$-calibrated over the distributions $\Theta_{\alpha} := \{p\in \Delta_{\Y}\mid \max_{y\in \Y}p_y \geq 1-\alpha\}$.  
\end{theorem}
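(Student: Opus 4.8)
The plan is to show that for a sufficiently small noise parameter $\alpha$, every distribution $p \in \Theta_\alpha$ gets embedded by $\varphi$ into a small neighborhood of some vertex $v_y$ where $y \in \mode(p)$, and that on this neighborhood the link $\maplinknoise$ correctly identifies $y$. The argument will parallel the proof of Theorem~\ref{thm:consvert}, but now the calibrated region is described concretely in terms of the simplex geometry rather than abstract $\epsilon$-balls, which is what makes it usable in applications.

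First I would establish the geometry of the sets $P_\alpha^y$. Note $\conv(\Psi_\alpha^y)$ is the set of distributions putting mass at least $1-\alpha$ on $y$, i.e. $\conv(\Psi_\alpha^y) = \{(1-\alpha)\delta_y + \alpha q \mid q \in \simplex\}$, so $\Theta_\alpha = \bigcup_{y \in \Y} \conv(\Psi_\alpha^y)$. Under $\varphi$, which is affine, $P_\alpha^y = (1-\alpha)v_y + \alpha P$; this is a shrunk copy of $P$ scaled toward $v_y$, with diameter $\alpha \cdot \diam(P)$, and it lies inside $P$. The key point: as $\alpha \to 0$, $P_\alpha^y$ collapses to $\{v_y\}$, and for any $\hat y \neq y$ the set $P_\alpha^{\hat y}$ collapses to $\{v_{\hat y}\}$, so these sets become separated. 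Concretely, if $\delta^* := \min_{y \neq \hat y} \|v_y - v_{\hat y}\|_2 > 0$ is the minimum intervertex distance, then for $\alpha$ small enough (say $\alpha \cdot \diam(P) < \delta^*/3$) the sets $\{P_\alpha^y\}_{y\in\Y}$ are pairwise disjoint and in fact pairwise separated by a positive margin.

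Next I would verify two things for such $\alpha$. (i) \emph{Correctness of the link on each piece}: for $u \in P_\alpha^y$, the nearest set among $\{P_\alpha^{\hat y}\}$ is $P_\alpha^y$ itself (distance $0$), so $\maplinknoise(u) = y$; and by the margin argument this holds on a full neighborhood of $P_\alpha^y$, so there is $\epsilon > 0$ with $B_\epsilon(P_\alpha^y) \subseteq (\maplinknoise)_y$. (ii) \emph{$y$ is a valid mode}: for any $p \in \conv(\Psi_\alpha^y)$ we have $p_y \geq 1-\alpha > 1/2 \geq p_{\hat y}$ for all $\hat y \neq y$ (using $\alpha < 1/2$), so $y \in \mode(p) = \modeprop(p)$. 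Then, exactly as in Theorem~\ref{thm:consvert}: by Proposition~\ref{thm:embedmin} the unique minimizer of $\E_{Y\sim p}[L^G_\varphi(u,Y)]$ is $\varphi(p) \in P_\alpha^y$, and any sequence $u_m \to \varphi(p)$ eventually lands in $B_\epsilon(P_\alpha^y)$, hence is eventually linked to $y$, so $\E[\ell_{0-1}(\maplinknoise(u_m),Y)] \to \E[\ell_{0-1}(y,Y)] = \min_r \E[\ell_{0-1}(r,Y)]$. This shows $(\prop[L^G_\varphi], \maplinknoise)$ is an $\ell_{0-1}$-calibrated property over $\Theta_\alpha$, and since $L^G_\varphi$ is strictly proper for $\prop[L^G_\varphi] = \varphi$ (Proposition~\ref{thm:embedmin}), Theorem~\ref{thm:conspropcal} upgrades this to $\ell_{0-1}$-calibration of the loss-link pair $(L^G_\varphi, \maplinknoise)$ over $\Theta_\alpha$.

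\textbf{The main obstacle} I anticipate is handling the tie-breaking and the boundary behavior of $\maplinknoise$ cleanly — specifically, making sure that a point $u$ equidistant from $P_\alpha^y$ and $P_\alpha^{\hat y}$ cannot arise for the relevant $u$'s, and that the "eventually within $\epsilon$" argument interacts correctly with the arbitrary deterministic tie-breaking policy. This is resolved by the positive-margin separation of the $P_\alpha^y$ for small $\alpha$: once the minimizer $\varphi(p)$ is strictly interior to one piece's neighborhood, ties never occur along the tail of the convergent sequence, so the tie-breaking policy is irrelevant there. A secondary point worth stating carefully is why $\alpha$ can be taken uniform over all $y$ and all $p$: since $\Y$ is finite there are finitely many vertices, so we take the minimum of the finitely many per-vertex thresholds, and then any smaller $\alpha \in [0, 1/2)$ works as well.
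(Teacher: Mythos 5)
Your proof is correct, and it reaches the same key geometric fact as the paper --- via linearity of $\varphi$, $P^y_\alpha = (1-\alpha)v_y + \alpha P$ is a shrunk copy of $P$ anchored at $v_y$, so the pieces collapse onto distinct vertices as $\alpha \to 0$ --- but it routes around the paper's main intermediate result. The paper's proof invokes Theorem~\ref{thm:consvert} to obtain, for each vertex, an $\epsilon$-ball contained in the strict calibration region $R_y$, and then chooses $\alpha$ small enough that $P^y_\alpha \subseteq B_{\epsilon''}(v_y) \subseteq R_y$; calibration is then inherited from membership in $R_\Y$. You instead observe directly that $\alpha < 1/2$ forces $p_y > 1/2$ for $p \in \conv(\Psi^y_\alpha)$, so $y$ is the unique mode, and you establish link correctness by the pairwise positive-margin separation of the sets $\{P^y_\alpha\}$ --- which is essentially the argument the paper reserves for Theorem~\ref{corr:lownoise}. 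Your route buys two things: it is self-contained (no dependence on the strict-calibration-region machinery, whose definition is tied to the MAP link $\maplink$ rather than the modified link $\maplinknoise$, a point the paper's Part~2 glosses over), and it handles the tie-breaking and neighborhood-linking behavior of $\maplinknoise$ explicitly via the margin, which the paper leaves implicit. The paper's route buys a tighter connection to its Section~\ref{sec:calregions} framework, presenting the low-noise result as a corollary of the existence of calibrated regions around vertices. Both are valid; yours is arguably the cleaner standalone argument.
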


\begin{proof} 
\noindent \textbf{Part 1 (Choosing $\alpha \in [0,.5 )$)}: 
Following the proof of Theorem \ref{thm:consvert}, there exists a strict calibration region $R$ via $(L^{2}_{\varphi},\maplinknoise)$ w.r.t. $\ell_{0-1}$ and hence, there exists an $\epsilon >0$ such that $B_{\epsilon}(v_y)\cap P\subseteq R_{y}$ for all $y\in\Y$ where .
Given that $\vertex (P)$ are unique points, there exists a sufficiently small $\epsilon' >0$ such that $B_{\epsilon'}(v) \cap B_{\epsilon'}(\hat{v})=\emptyset$ for all $v,\hat{v}\in\vertex (P)$ where $v\neq \hat{v}$.
Let $\epsilon'' =\min{(\epsilon ,\epsilon')}$.
For any $y\in \Y$, observe the set $\conv(\Psi_{\alpha}^{y})$, defined using any $\alpha \in [0,.5)$, is a scaled-down translated unit simplex and that for all $p\in \conv (\Psi_{\alpha}^{y}) \subset \Delta_{\Y}$ it holds that $y = \text{mode}(p)$.

We shall show that for some sufficiently small $\alpha \in [0,.5)$, $P_{\alpha}^{y}$ is a scaled down version of $P$ positioned at the respective vertex $v_y$.
Furthermore, we shall show that $P_{\alpha}^{y}\subset B_{\epsilon''}(v_y )\cap P\subseteq R_y$ for all $y\in\Y$.
Observe that by linearity of $\varphi$,

\begin{comment}
   \begin{align*}
    P_{\alpha}^{y} &:=  \varphi(\conv  (\Psi_{\alpha}^{y})) \\
    &\phantom{:}= \conv (\varphi (\Psi_{\alpha}^{y})) \\
    &\phantom{:}= \conv (\varphi(\{(1-\alpha) \delta_y + \alpha \delta_{\hat{y}} | \hat{y} \in \Y\})) \\
    &\phantom{:}= \conv (\{(1-\alpha) v_y+ \alpha v_{\hat{y}} | \hat{y} \in \Y\})
\end{align*} 
\end{comment}

$$P_{\alpha}^{y}:=\varphi(\conv  (\Psi_{\alpha}^{y}))= \conv (\varphi(\{(1-\alpha) \delta_y + \alpha \delta_{\hat{y}} | \hat{y} \in \Y\})) = \conv (\{(1-\alpha) v_y+ \alpha v_{\hat{y}} | \hat{y} \in \Y\}) $$

and hence, $P_{\alpha}^{y}$ is a scaled version of $P$ positioned at $v_y$.
Hence for some sufficiently small $\alpha$, $(1-\alpha) v_y+ \alpha v_{\hat{y}} \in  B_{\epsilon''}(v_y)$ for all $\hat{y}$ and hence $P_{\alpha}^{y} \subseteq B_{\epsilon''}(v_y) \subseteq R_y$.
With said sufficiently small $\alpha$, define $\maplinknoise$ and the respective sets $\conv (\Psi^{y}_{\alpha})$ for each $y\in \Y$.
Using the previous $\alpha$, define the set $\Theta_{\alpha}$ as well. \newline 

\noindent \textbf{Part 2 (Showing Calibration)}:
Recall, by Proposition \ref{thm:embedmin}, for any $p\in \Delta_{\Y}$, $u=\varphi (p)$ minimizes the expected surrogate loss $\E_{\Y\sim p} [L_{\varphi}^2(u,Y)]$.
For any fixed $y\in \Y$, observe that $\conv \{(1-\alpha )\delta_y +\alpha \delta_{\hat{y}}\mid \hat{y}\in \Y \}=\{ p\in\Delta_{\Y}:p_y \geq 1-\alpha \}\subset \Delta_{\Y}$ and hence, by Proposition \ref{thm:embedmin}, $\cup_{y\in\Y}P_{\alpha}^{y} $ contains all of the minimizing surrogate reports with respect to $\Theta_{\alpha}$. 
Finally, since every $\cup_{y\in\Y}P_{\alpha}^{y} \subseteq R_{\Y}$, by the definition of strict calibration region, it holds that $(L_{\varphi}^{2}, \maplinknoise )$ is  $\ell_{0-1}$-calibrated for $\Theta_{\alpha}$.
\end{proof}

%%%%%%%%%%%%%%%%%%%%%%%%%%%%%

%%%%%%%%%%%%%%%%%%%%%%%%%%%%%

\subsection{Embedding into the Unit Cube and Permutahedron under Low-Noise }\label{subsec:permutahedron}

In this section, we demonstrate embedding onto the unit cube and the permutahedron \citep{blondel2020learning,seger2018investigation}.
We show that by embedding $2^d$ outcomes into a $d$ dimensional unit cube $P^{\square}$, $(L_{\varphi}^2, \psi^{P^{\square},\alpha})$ is calibrated over $\Theta_\alpha$ for all $\alpha \in [0,\frac 1 2)$.
%which means that it is not dependent on the number of outcomes.
%\dk{from the sentence, I'm unclear of what is independent of the number of outcomes, calibration or $\alpha$? I think something like the following sentence would be clearer:}
Furthermore, we found that by embedding $d!$ outcomes into a $d$ dimensional permutahedron $P^{w}$, $(L_{\varphi}^2, \psi^{P^{w},\alpha})$ is calibrated for $\Theta_\alpha$ for $\alpha \in [ 0,\frac{1}{d})$. 
Theorem \ref{corr:lownoise} enables us to simultaneously study the aforementioned embeddings.

\begin{theorem}\label{corr:lownoise}
    Let $L^2$ be a Square Loss $\varphi$ be any polytope embedding, and $L^{2}_{\varphi}$ be the loss induced by $(L^2,\varphi )$.
    Fix $\alpha \in [0,.5)$ and with it define $\Theta_{\alpha}$.
    If for all $y,\hat{y}\in \Y$ such that $y\neq \hat{y}$ it holds that $P^{y}_{\alpha}\cap P^{\hat{y}}_{\alpha} =\emptyset $, then $(L^{2}_{\varphi}, \maplinknoise )$ is $\ell_{0-1}$-calibrated for  $\Theta_{\alpha}$ where $\maplinknoise (u) = \argmin_{y\in \Y   }  \|u-P^{y}_{\alpha}\|_{2}$. 
\end{theorem}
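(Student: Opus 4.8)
The plan is to follow the structure of the proof of Theorem~\ref{thm:lownoisexists}, but replace the step in which a small $\alpha$ is chosen to force $P^y_\alpha \subseteq B_{\epsilon''}(v_y) \subseteq R_y$ with the weaker hypothesis now supplied, namely that the scaled-down polytopes $P^y_\alpha$ are pairwise disjoint. First I would recall, via Proposition~\ref{thm:embedmin} and the linearity of $\varphi$, that for any fixed $y$ the set $\conv(\Psi^y_\alpha) = \{p \in \simplex \mid p_y \geq 1-\alpha\}$ embeds to $P^y_\alpha = \conv(\{(1-\alpha)v_y + \alpha v_{\hat y} \mid \hat y \in \Y\})$, a scaled copy of $P$ positioned at $v_y$, and that $\bigcup_{y\in\Y} P^y_\alpha = \varphi(\Theta_\alpha)$ is precisely the set of all surrogate reports that minimize $\E_{Y\sim p}[L^G_\varphi(u,Y)]$ for some $p \in \Theta_\alpha$. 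Moreover, since $\alpha < 1/2$, every $p$ with $\varphi(p) \in P^y_\alpha$ has $\text{mode}(p) = \{y\}$, a unique mode.

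The key step is then to argue that the disjointness hypothesis lets us define a link $\maplinknoise$ on $\bigcup_{y\in\Y} P^y_\alpha$ that is correct everywhere. Concretely, for $u \in P^y_\alpha$ we have $\argmin_{y'\in\Y}\|u - P^{y'}_{\alpha}\|_2 = y$: the distance to $P^y_\alpha$ is zero, and by disjointness no other $P^{\hat y}_\alpha$ can also contain $u$, so (after breaking ties for points equidistant from a nonunique closest polytope, which can only occur outside $\bigcup_y P^y_\alpha$ and is immaterial to calibration, exactly as in Theorem~\ref{thm:lownoisexists}) the link sends $u$ to $y$, which is the unique mode of every $p \in \varphi^{-1}(u) \cap \Theta_\alpha$. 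Hence on each $P^y_\alpha$, $\maplinknoise$ agrees with the correct target prediction; said differently, $P^y_\alpha \subseteq (\maplinknoise)_y$ and every $p$ mapping there has mode $y$, which is the defining property needed for a strict calibration region for $\ell_{0-1}$ restricted to $\Theta_\alpha$. Strictness of $L^G_\varphi$ for $\prop[L^G_\varphi] = \varphi$ is immediate from Proposition~\ref{thm:embedmin}. Then I would invoke the same calibration-via-convergence argument as before: if $u_m \to \prop[L^G_\varphi](p) = \varphi(p)$ for $p \in \Theta_\alpha$, then eventually $u_m$ lies close enough to $\varphi(p) \in P^y_\alpha$ that $\maplinknoise(u_m) = y \in \text{mode}(p)$, so $\E_{Y\sim p}[\ell_{0-1}(\maplinknoise(u_m),Y)] \to \min_r \E_{Y\sim p}[\ell_{0-1}(r,Y)]$, giving that $(\prop[L^G_\varphi], \maplinknoise)$ is an $\ell_{0-1}$-calibrated property over $\Theta_\alpha$, and finally Theorem~\ref{thm:conspropcal} upgrades this to $\ell_{0-1}$-calibration of the pair $(L^G_\varphi, \maplinknoise)$ over $\Theta_\alpha$.

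One subtlety I would want to handle carefully is that for $u_m \to \varphi(p)$ we need $\maplinknoise(u_m) = y$ for all large $m$, which requires that $\varphi(p)$ lies in the \emph{relative interior} of the region linking to $y$, or at least that the link is locally constant near points of $P^y_\alpha$ that arise as $\varphi(p)$. Since the $P^{\hat y}_\alpha$ are compact and pairwise disjoint, there is a positive gap $\min_{y\ne \hat y} \mathrm{dist}(P^y_\alpha, P^{\hat y}_\alpha) > 0$; for $u_m$ within half this gap of $\varphi(p) \in P^y_\alpha$, the nearest polytope among $\{P^{y'}_\alpha\}$ is still $P^y_\alpha$, so $\maplinknoise(u_m) = y$. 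This is exactly the compactness/disjointness estimate that the hypothesis is designed to provide, and it is the main (though routine) technical point; everything else is a transcription of the proof of Theorem~\ref{thm:lownoisexists} with ``$P^y_\alpha \subseteq R_y$'' replaced by ``the $P^y_\alpha$ are disjoint, hence each is its own strict calibration region for $\ell_{0-1}$ on $\Theta_\alpha$.'' I do not expect any serious obstacle beyond making sure the tie-breaking convention for $\maplinknoise$ outside $\bigcup_y P^y_\alpha$ is acknowledged as harmless, since calibration only constrains behavior at minimizing reports.
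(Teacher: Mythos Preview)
Your proposal is correct and follows essentially the same route as the paper's proof: both arguments observe that $\varphi(p)\in P^y_\alpha$ for $p\in\Theta_\alpha$ with $p_y\geq 1-\alpha$, use compactness and pairwise disjointness of the $P^y_\alpha$ to extract a uniform positive separation $\hat\epsilon$ so that any sequence $u_m\to\varphi(p)$ eventually links to $y$ under $\maplinknoise$, and then invoke Theorem~\ref{thm:conspropcal} via the strict properness of $L^G_\varphi$. The ``subtlety'' you flag about needing a positive gap between the compact disjoint sets is exactly the one technical step the paper carries out, and your treatment of tie-breaking outside $\bigcup_y P^y_\alpha$ as immaterial mirrors the paper's stance.
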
\vspace{-.4cm}

\begin{proof}
%Recall from the proof of Theorem \ref{thm:lownoisexists} we know there exists an $\alpha \in [0,.5)$ where for all $y,\hat{y}\in \Y$ such that for all $y\neq \hat{y}$ it holds that $B_{\epsilon''}(v_y)\cap B_{\epsilon''}(v_{\hat{y}}) = \varnothing$ hence 
Pick an $\alpha$ such that for all $y, \haty \in \Y$, $P^{y}_{\alpha}\cap P^{\hat{y}}_{\alpha} =\emptyset $.
Define $\Theta_{\alpha}$ and $\maplinknoise$ accordingly.
For $p\in \Theta_{\alpha}$ and some $y\in\Y$, say a sequence $\{u_m\}$ converges to $\prop [L_{\varphi}^{2}] (p) =\varphi (p)\in P^{y}_{\alpha}$, where the equality follows from Proposition \ref{thm:embedmin}.
Given that each $P^{y}_{\alpha}$ is closed and pairwise disjoint, there exists some $\hat{\epsilon}>0$ such that for all $y,\hat{y}\in\Y$ where $y\neq \hat{y}$, it also holds that $ (P^{y}_{\alpha}+B_{\hat{\epsilon}}) \cap (P^{\hat{y}}_{\alpha}+B_{\hat{\epsilon}})=\emptyset$ where $+$ denotes the Minkowski sum.
Since $\{u_m\}$ converges to $\varphi(p)$, there exists some $N\in \mathbb{N}$ such that for all $n\geq N$, $\|u_n -  \varphi (p)   \|_{2}< \hat{\epsilon} $.
By the definition of $\maplinknoise$, any $u_n$ where $n\geq N$ will be mapped to $y$, the correct unique report given that $\prop [L_{\varphi}^2](p)\in P^{y}_{\alpha}$.
Hence, $(\prop [L_{\varphi}^{2}],\maplinknoise )$ is $\ell_{0-1}$-calibrated property with respect to $\Theta_{\alpha}$.
Finally, since $L_{\varphi}^{2}$ is strictly proper for $\prop[L_{\varphi}^{2}]$, by Theorem \ref{thm:conspropcal}, we have that  $(L_{\varphi}^{2}, \maplinknoise )$ is $\ell_{0-1}$-calibrated for $\Theta_{\alpha}$. 
\end{proof}
%is a strictly proper \dk{if using new def: strictly elicits} loss for 

%%%%%%%%%%%%%%%%%%%%%%%
\paragraph{Unit Cube}
%\dkt{since we don't show proofs in these subsubsections anymore, got rid of headers to reduce space}

Define a unit cube in $d$-dimensions by $P^{\square} := \conv (\{-1,1\}^{d})$. 
Binary encoding outcomes into the elements of $\{-1,1\}^d$ (the vertices of a unit cube) is a commonly used method in practice (e.g., \citep{seger2018investigation,yu2018lovasz}).
We show that calibration holds under a low noise assumption of $\Theta_{\alpha}$ when $\alpha < .5$.

%As stated in \cite{blondel2019structured}, The Euclidean projection of $u$ onto $P^{\square}$ is equal to a coordinate-wise clipping of $u$, i.e., $\max (\min (u_i,1),0)$ for all $i\in [d]$.
\begin{restatable}{corollaryc}{unitcubenoise}
  \label{corr:unitcubenoise}
   Let $\varphi$ be an embedding from $2^d$ outcomes into the vertices of $P^{\square}$ in $d$-dimensions and define an induced loss $L^{2}_{\varphi}$. 
   Fix $\alpha \in [0,.5)$ and define $\Theta_{\alpha}$.
$(L^{2}_{\varphi},\psi^{P^{\square},\alpha})$ is $\ell_{0-1}$-calibrated for $\Theta_{\alpha}$.

\end{restatable}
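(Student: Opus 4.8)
The plan is to invoke Theorem~\ref{corr:lownoise}, so the entire task reduces to verifying its hypothesis: for the specific embedding $\varphi$ sending $2^d$ outcomes to the vertices $\{-1,1\}^d$ of $P^\square$, and for any $\alpha \in [0,.5)$, the scaled-down copies $P^y_\alpha = \conv(\{(1-\alpha)v_y + \alpha v_{\hat y} \mid \hat y \in \Y\})$ are pairwise disjoint. Recall from the proof of Theorem~\ref{thm:lownoisexists} that $P^y_\alpha$ is a copy of $P^\square$ scaled by $\alpha$ and recentered: explicitly, since $\sum_{\hat y} (1-\alpha) v_y + \alpha v_{\hat y}$ ranges over $(1-\alpha)v_y + \alpha \cdot \conv(\vertex(P^\square))$, we have $P^y_\alpha = (1-\alpha) v_y + \alpha P^\square$. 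So I first record this clean description of $P^y_\alpha$ and then argue the $2^d$ translates-and-scalings are disjoint.

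The key geometric step is a coordinatewise argument exploiting the product structure of the cube. Fix distinct vertices $v_y, v_{\hat y} \in \{-1,1\}^d$; they differ in some coordinate $i$, say $(v_y)_i = 1$ and $(v_{\hat y})_i = -1$ (the other case is symmetric). Every point $u \in P^y_\alpha$ has $i$-th coordinate of the form $(1-\alpha)(1) + \alpha t$ for some $t \in [-1,1]$, hence $u_i \ge (1-\alpha) - \alpha = 1 - 2\alpha$. Likewise every point $w \in P^{\hat y}_\alpha$ has $w_i = (1-\alpha)(-1) + \alpha t' \le -(1-\alpha) + \alpha = -(1-2\alpha)$. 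Since $\alpha < 1/2$ we have $1 - 2\alpha > 0$, so the $i$-th coordinates of points in $P^y_\alpha$ and $P^{\hat y}_\alpha$ occupy disjoint intervals $[1-2\alpha, \,\cdot\,]$ and $[\,\cdot\,, -(1-2\alpha)]$; therefore $P^y_\alpha \cap P^{\hat y}_\alpha = \emptyset$. This holds for every pair of distinct outcomes, so the hypothesis of Theorem~\ref{corr:lownoise} is satisfied.

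Finally I apply Theorem~\ref{corr:lownoise} directly: the Bregman divergence $D_G$ is arbitrary, $\varphi$ is the stated polytope embedding (valid since $|\vertex(P^\square)| = 2^d = n$ and $d < 2^d - 1$ for $d \ge 2$), $L^G_\varphi$ is the induced loss, and we have just shown $P^y_\alpha \cap P^{\hat y}_\alpha = \emptyset$ for all $y \ne \hat y$; hence $(L^G_\varphi, \psi^{P^\square}_\alpha)$ is $\ell_{0\text{-}1}$-calibrated for $\Theta_\alpha$, as claimed. I do not expect any serious obstacle here — the only mild care needed is the bookkeeping that $P^y_\alpha$ really equals $(1-\alpha)v_y + \alpha P^\square$ (which follows from linearity of $\varphi$ exactly as in the proof of Theorem~\ref{thm:lownoisexists}) and the observation that the argument works for the full half-open range $\alpha \in [0,.5)$ rather than requiring $\alpha$ to be small, which is what makes the cube embedding especially clean compared to the permutahedron.
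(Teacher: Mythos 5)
Your proof is correct and follows essentially the same route as the paper: both reduce to verifying the hypothesis of Theorem~\ref{corr:lownoise} by showing the scaled copies $P^y_\alpha$ are pairwise disjoint, the paper by noting each $P^y_\alpha$ lies in the orthant of $v_y$, you by the equivalent (and arguably cleaner) coordinatewise separation $u_i \ge 1-2\alpha$ versus $u_i \le -(1-2\alpha)$ at a coordinate where the two vertices differ. No gaps; your version also cleanly covers the endpoint $\alpha = 0$, which the paper's phrasing ($\alpha \in (0,.5)$) glosses over.
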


Corollary \ref{corr:unitcubenoise} suggests that binary encoding is an appropriate methodology when one has a prior over the data that the mode of the label distribution $\Pr[Y \mid X = x]$ is greater than half for all $x \in \X$.
Interestingly, the bound of $\alpha$ is not dependent on the dimension of $d$. 
We now present a result for embedding outcomes into a factorially lower dimension via the permutahedron.
Intuitively, ranking can be recast as a multiclass classification problem, in which case the outcomes are orderings of the $d$ possible labels.

%\begin{corollary}
%   Let $\varphi$ be an embedding from $2^d$ outcomes into the vertices of $P^{\square}$ in $d$-dimensions and define an induced loss $L^{G}_{\varphi}$. 
%   Fix $\alpha \in [0,.5)$ and define $\Theta_{\alpha}$.
%$(L^{G}_{\varphi},\psi^{P^{\square}}_{\alpha})$% is $\ell_{0-1}$-calibrated for %$\Theta_{\alpha}$.
%\end{corollary}

%%%%%%%%%%%%%%%%%%%%%%%

\paragraph{Permutahedron}
Let $\Sc_{d}$ express the set of permutations on $[d]$.
The permutahedron associated with a vector $w \in\reals^{d}$ is defined to be the convex hull of the permutations of the indices of $w$, i.e.,
$ P^{w} := \conv ( \{ \pi (w)  \mid \pi \in\Sc_{d} \} ) \subset \reals^d$.
The permutahedron may serve as an embedding from $d!$ outcomes into $d$-dimensions; it is a natural choice for embedding full rankings over $d$ items.

\begin{restatable}{corollaryc}{permanoise}
  \label{corr:permanoise}
   Let $\varphi$ be an embedding from $d!$ outcomes into the vertices of $P^{w}$ in $d$ dimensions such that $w=(0,\frac{1}{\beta d},\frac{2}{\beta d},\dots ,\frac{d-1}{\beta d})\in \reals^{d}$ where $\beta = \frac{d-1}{2}$. 
   Fix $\alpha \in [ 0,\frac{1}{d})$.
    Then $(L^{2}_{\varphi},\psi^{P^w,\alpha})$ is $\ell_{0-1}$-calibrated over $\Theta_{\alpha}$.
\end{restatable}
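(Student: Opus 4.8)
\textbf{Proof plan for Corollary \ref{corr:permanoise}.}
The plan is to invoke Theorem \ref{corr:lownoise}: it suffices to show that under the stated choice of $w$ and $\alpha \in (0, \tfrac{1}{d})$, the scaled-down copies $P^{\pi}_\alpha$ and $P^{\sigma}_\alpha$ of the permutahedron, anchored at distinct vertices $v_\pi = \pi(w)$ and $v_\sigma = \sigma(w)$, are disjoint. Recall from the proof of Theorem \ref{thm:lownoisexists} that $P^{\pi}_\alpha = \conv\{(1-\alpha)v_\pi + \alpha v_\rho \mid \rho \in \Sc_d\}$, i.e.\ a copy of $P^w$ shrunk by the factor $\alpha$ and translated so that it sits at $(1-\alpha)v_\pi$. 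Geometrically $P^{\pi}_\alpha \subseteq B_{\alpha \cdot \mathrm{diam}(P^w)}((1-\alpha)v_\pi)$, so a clean sufficient condition is that the anchor points $(1-\alpha)v_\pi$ are separated by more than $2\alpha\,\mathrm{diam}(P^w)$; but this crude bound is not tight enough near $\alpha = \tfrac1d$, so instead I would argue via a separating hyperplane directly, exploiting the geometry of the permutahedron.

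The key structural fact I would use is that all vertices of $P^w$ lie on the hyperplane $\sum_i u_i = \sum_i w_i =: c$, and more importantly that the permutahedron is contained in the box $[\min_i w_i, \max_i w_i]^d = [0, \tfrac{d-1}{\beta d}]^d = [0, \tfrac{2}{d}]^d$ (using $\beta = \tfrac{d-1}{2}$, so $\tfrac{d-1}{\beta d} = \tfrac{2}{d}$). The coordinates of $w$ are equally spaced with gap $\tfrac{1}{\beta d} = \tfrac{2}{d(d-1)}$. Fix two distinct permutations $\pi \neq \sigma$; then there is an index $i$ with $\pi_i \neq \sigma_i$, say $w_{\pi_i} < w_{\sigma_i}$, so the $i$-th coordinates of $v_\pi$ and $v_\sigma$ differ by at least one gap $\tfrac{2}{d(d-1)}$. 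Now I would separate $P^{\pi}_\alpha$ and $P^{\sigma}_\alpha$ by a hyperplane normal to $e_i$: any point of $P^{\pi}_\alpha$ has $i$-th coordinate $(1-\alpha)(v_\pi)_i + \alpha (v_\rho)_i$ for some $\rho$, which ranges over $[(1-\alpha)(v_\pi)_i, (1-\alpha)(v_\pi)_i + \alpha \cdot \tfrac{2}{d}]$ since all coordinates of all vertices lie in $[0,\tfrac{2}{d}]$. For disjointness along axis $i$ it suffices that these intervals for $\pi$ and $\sigma$ are disjoint, i.e.\ that the gap between $(1-\alpha)(v_\pi)_i$ and $(1-\alpha)(v_\sigma)_i$ exceeds the interval width $\alpha \cdot \tfrac2d$; since that gap is at least $(1-\alpha)\cdot\tfrac{2}{d(d-1)}$, it suffices that $(1-\alpha)\tfrac{2}{d(d-1)} > \alpha \tfrac2d$, equivalently $(1-\alpha) > \alpha(d-1)$, i.e.\ $\alpha < \tfrac1d$. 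This is exactly the stated range, so the argument closes.

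There is one gap in the crude per-coordinate argument: disjointness of the projections onto a single axis is sufficient for disjointness of the sets, so I only need one good coordinate $i$, which always exists when $\pi \neq \sigma$ — hence the ``for all $y \neq \hat y$'' hypothesis of Theorem \ref{corr:lownoise} is verified. Having established $P^{\pi}_\alpha \cap P^{\sigma}_\alpha = \emptyset$ for all distinct $\pi, \sigma$, Theorem \ref{corr:lownoise} immediately gives that $(L^{G}_{\varphi}, \psi^{P^w}_\alpha)$ is $\ell_{0-1}$-calibrated for $\Theta_\alpha$.

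\textbf{Main obstacle.} The delicate point is getting the bound tight enough to reach $\alpha < \tfrac1d$ rather than some more conservative threshold; the naive ``shrunk copy fits in a small ball'' estimate loses too much, and the fix is to notice that the relevant interval width along the separating axis is governed by the \emph{box} width $\tfrac2d$ of the permutahedron, not its diameter, while the separation between anchors is governed by the minimal coordinate gap $\tfrac{2}{d(d-1)}$ of $w$ — it is precisely the ratio of these two quantities, scaled by $(1-\alpha)/\alpha$, that produces the clean inequality $\alpha < 1/d$. I would also need to double-check the endpoint behavior: at $\alpha = 0$ the copies degenerate to the vertices (trivially disjoint, but then $\Theta_0$ consists only of point masses), which is why the corollary states the open interval $(0, \tfrac1d)$; and confirm that $w$ as given indeed has $n = d!$ distinct vertices, i.e.\ that all coordinates of $w$ are distinct, which holds since they are strictly increasing.
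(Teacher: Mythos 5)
Your proposal is correct, and the coordinate computation checks out: the projection of $P^{\pi}_{\alpha}$ onto axis $i$ is indeed contained in $[(1-\alpha)(v_\pi)_i,\,(1-\alpha)(v_\pi)_i+\tfrac{2\alpha}{d}]$, the anchors are separated by at least $(1-\alpha)\tfrac{2}{d(d-1)}$ in some coordinate whenever $\pi\neq\sigma$ (since all entries of $w$ are distinct and lie on a grid of spacing $\tfrac{1}{\beta d}$), and the resulting strict inequality is exactly $\alpha<\tfrac1d$. However, your route differs from the paper's. The paper does not argue pairwise separation directly: it observes that $P^w\subset\Delta_d$ and partitions the permutahedron into the $d!$ order chambers $\Delta_d^{\pi}=\{u\mid u_1\le\dots\le u_d\}$ (with disjoint interiors), then shows by an adjacent-coordinate contradiction argument that each extreme point of $P^{y}_{\alpha}$ satisfies the ordering constraints of its own chamber at $\alpha=\tfrac1d$, possibly touching the chamber boundary, and lies strictly inside for $\alpha<\tfrac1d$; pairwise disjointness then follows from the disjointness of the chamber interiors, and Theorem \ref{corr:lownoise} is invoked as you do. Your argument is more elementary and more local --- it verifies the hypothesis of Theorem \ref{corr:lownoise} one pair at a time via a single separating coordinate, and makes transparent why the threshold is the ratio of the box width $\tfrac2d$ to the minimal coordinate gap $\tfrac{2}{d(d-1)}$ --- whereas the paper's argument yields the extra structural fact that each low-noise region sits inside its own Weyl chamber of the permutahedron, which is what Figure \ref{fig:noise-plots} depicts. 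Both arguments arrive at the same sharp range $\alpha\in(0,\tfrac1d)$.
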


The calibration region in Corollary \ref{corr:permanoise} show that consistency in $\Theta_\alpha$ shrinks exponentially in $d$.
Unless one has a prior that the data follows some form of a power distribution, Corollary \ref{corr:permanoise} suggests not to factorially embed outcomes.

%\rick{For comments asking for proof regarding permutations, I agree however this would lead me into defining stuff like vertex transitive and symmetric groups (I hand wave it with ``symmetry").}

%The Permutahedron is a zonotype polytople meaning that said polytope is the byproduct of applying an affine transformation to the unit cube. 
%Assuming the difference of index values of $w$ is constant say $\Delta_{w}$, the neighbors of a vertex $\pi\cdot w$ are the permutations where applying a swap on two index of $\pi\cdot w$ where the gap between index are $\Delta_w$ \cite{gaiha1977adjacent}. 

%\begin{multline*}
%ne(\pi\cdot w)=\{\pi\cdot v\in \vertex P(w)| \exists i,j\in [d], 
%|\pi\cdot w_{i}-\pi\cdot  v_j |=\Delta_w ,  \pi\cdot w=swap(\pi\cdot v,i,j)  \}
%\end{multline*}

%As stated in \cite{blondel2019structured, negrinho2014orbit, zeng2015ordered} the Euclidean projection onto $P^{w}$ reduces to sorting, which takes $O(d\log d)$, followed by isotonic
%regression, which takes $O(d)$ hence, promoting the computational feasibility of $\psi^{P^{w}}_{\alpha}$.

%%%%%%%%%%%%%%%%%%%%%%%%%%%
%%%%%%%%%%%%%%%%%%%%%%%%%%%

\begin{figure}[t]
\centering

\begin{minipage}[t]{0.05\textwidth}
	\centering
\end{minipage}
\hfill
\begin{minipage}[t]{0.35\textwidth}
	\centering
	\includegraphics[scale=0.12]{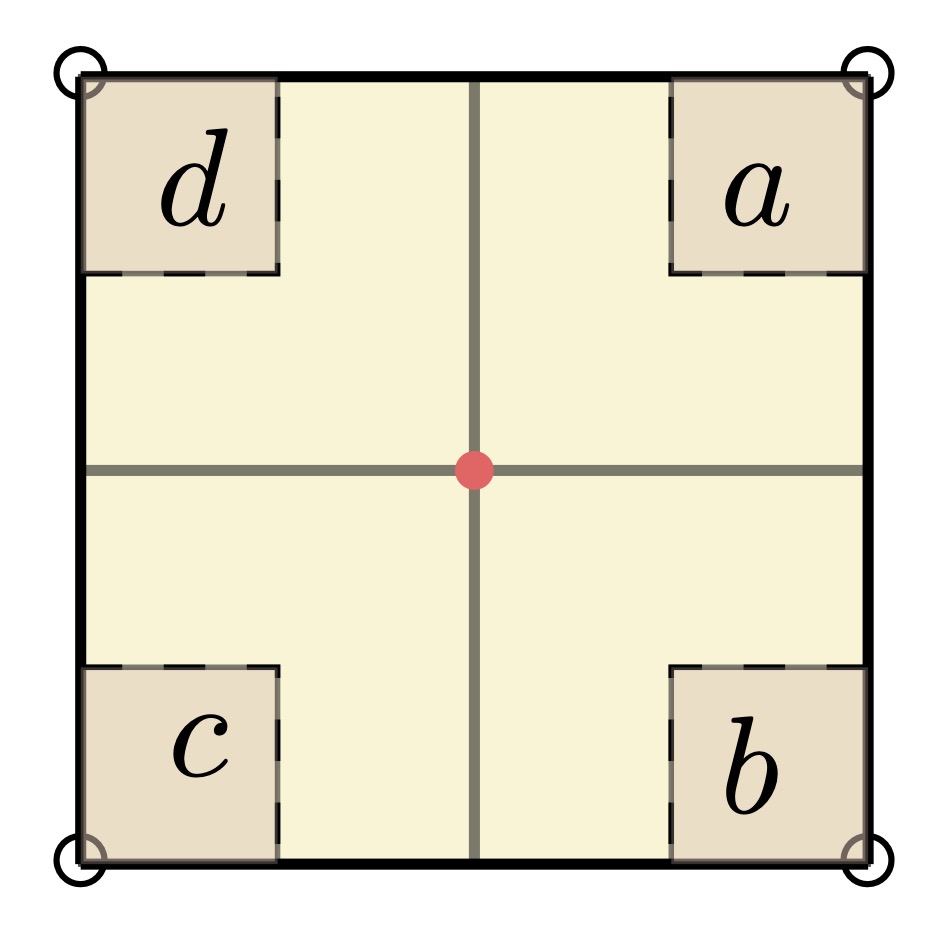}
\end{minipage}
\hfill
%\captionsetup{justification=centering}
  \begin{minipage}[t]{0.45\textwidth}
    \includegraphics[scale=0.092]{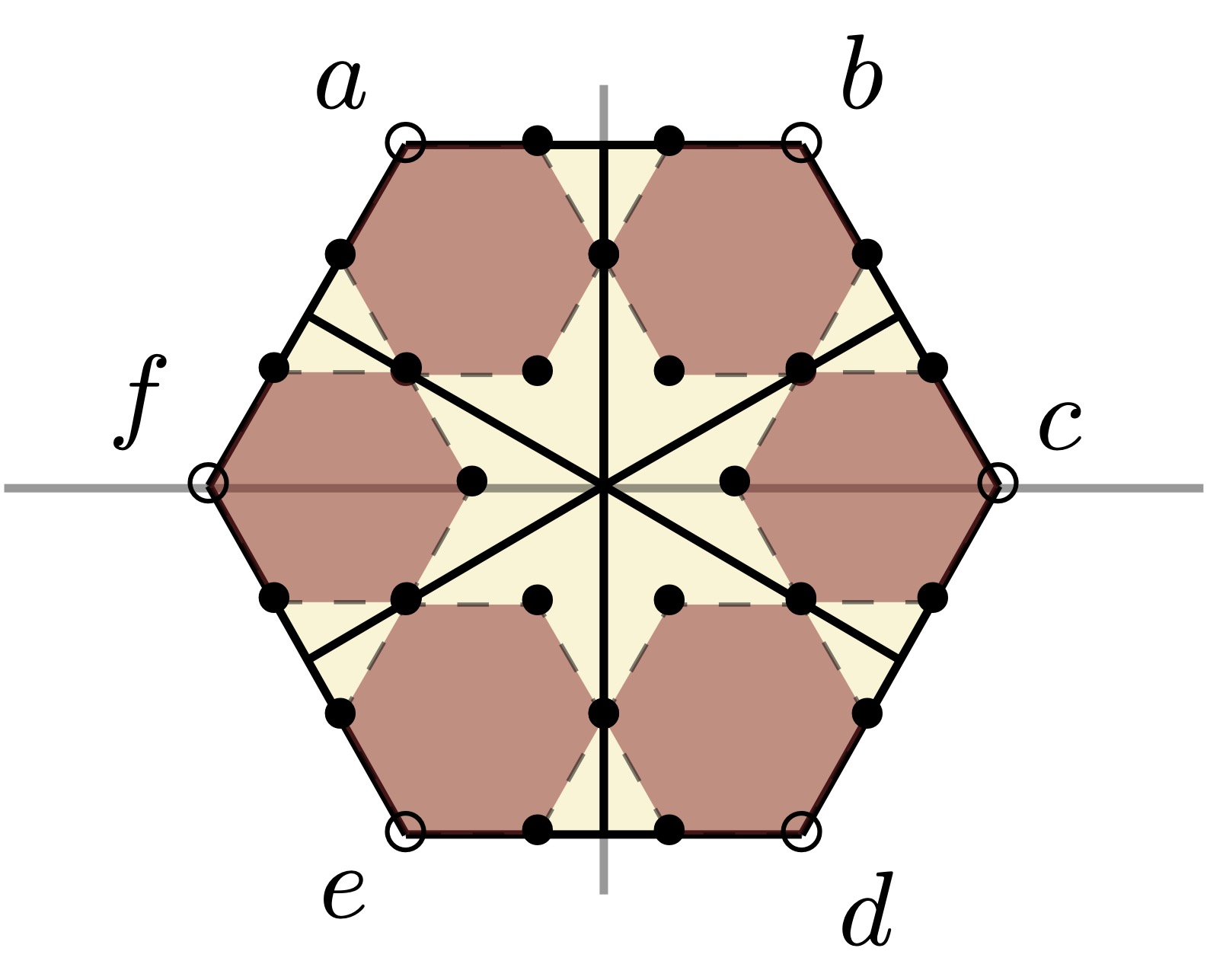}
  \end{minipage}
\begin{minipage}[t]{0.1\textwidth}
	\centering
\end{minipage}
\hfill
\caption{(Left) Corners represent the strict calibration regions for $\Theta_{\alpha}$ where $\Y =\{a,b,c,d \}$ is embedded into a two dimensional unit cube such that $\alpha = .25$. 
%Note that, while the strict calibration regions do not compose the entire report space, calibration means that only points in strict calibration regions minimize the expected loss. \jessie{Added the prev sentence. not sure how I feel about it? Maybe move to discussion around def 9}
(Right) Auburn regions show that strict calibration holds for $\Theta_{\alpha}$ where $\Y =\{a,b,c,d,e,f\}$ is embedded into a three-dimensional permutahedron such that $\alpha =\frac{1}{3}-\epsilon $.}
\label{fig:noise-plots}
\end{figure}

%%%%%%%%%%%%%%%%%%%%%%%%%%%%%%%%

%%%%%%%%%%%%%%

%%%%%%%%%%%%%%

\section{Elicitation in Low Dimensions with Multiple Problem Instances }\label{sec:multiinst}
The tools developed in previous sections now enable us to address the setting in which we require full consistency, $\P = \simplex$, but also desire surrogate prediction dimension $ d \ll n-1$.
% In particular, we assume that we seek just to elicit the mode and not to directly minimize the target risk for the 0-1 loss.
We side-step the $n-1$ lower bound by utilizing multiple problem instances and aggregation of the outputs.  
Although cumulatively we have a larger surrogate prediction dimension than $n-1$, each individual problem instance has a less than $n-1$ surrogate prediction dimension. 
This approach is well-motivated since it allows for distributed computing of separate, smaller models which leads to faster convergence overall since in general optimization is at least $poly(d)$.
Previous work such as \cite{ramaswamy2014consistency} has explored the consistency of multiclass problem reductions; however, we take a different, geometrically motivated, approach.

%%%%%%%%%%%%%%%%%%%%%%%
%%%%%%%%%%%%%%%%%%%%%%%
\begin{definition}
Extending Definition \ref{def:elicit}, we say a loss and link pair $(L, \psi )$, where  $L: \reals^d \times \Y \to \reals$ and $\psi: \reals^d \to \R$, elicits a property $\Gamma:\P \toto \R$ on $\P\subseteq \Delta_{\Y}$ if  $\forall \; p \in \mathcal{P}, \; \Gamma (p)=\psi(\argmin_{u\in\reals^d}\E_{Y\sim p} [L(u,Y)])$.
\end{definition}
%\rickt{make equation double line in print version}

\begin{definition}[$(n,d,m)$-Polytope Elicitable]
Suppose we are given a property $\gamma :\P \toto \R$ such that $\P\subseteq \Delta_{\Y}$ and  $|\Y |=n$ finite outcomes.
Say we have $m$ unique polytope embeddings $\{\varphi_{j}:\simplex \to \reals^d \}_{j=1}^{m}$ where $d < n-1$, and a set of induced losses $\{L^{2}_{\varphi_{j}}\}_{j=1}^{m}$ and links $\psi_{j} :\reals^{d}\to \B_{j}$ defined wrt. $\varphi_{j}$, where $\B_j$ is an arbitrary report set.
For each $j\in [m]$, assume the pair $(L_{\varphi_{j}}^{2}, \psi_{j})$ elicits the property $\Gamma_j :\P \toto \B_j$.
If there exists a function $\Upsilon :\B_1\times \dots \times \B_m\toto \R$ such that for any $p\in\Delta_{\Y}$ it holds that $\Upsilon (\Gamma_1(p),\dots , \Gamma_m(p)) = \gamma (p)$,
%\rick{updated to equality since we have a a full ordering so for our example this works, also avoids indirect and weak elicitation}
we say that $\gamma$ is $(n,d,m)$-Polytope Elicitable over $\P$.
%\rick{This is really more indirect but may not be worth going into the dif here}
\end{definition}
%%%%%%%%%%%%%%%%%%%%%%%

\noindent Equivalently, we will also say that the pair $(\{ (L^{2}_{\varphi_j},\psi_{j}) \}_{j=1}^{m},\Upsilon)$ $(n,d,m)$-Polytope elicits the property $\gamma$ with respect to $\P$.

%%%%%%%%%%%%%%%%%%%%%%%
We shall express a $d$-cross polytope by $P^{\oplus} := \conv (\{ \pi ((\pm {1},0,\dots , 0)) \mid \pi\in \Sc_d\} )$ where $(\pm {1},0,\dots , 0)\in\reals^{d}$.
Observe that a $d$-cross polytope has $2d$ vertices.
For any vertex of a d-cross polytope $v\in\vertex (P^{\oplus})$, we shall say that $(v,-v)$ forms a diagonal vertex pair.

\begin{restatable}{lemmac}{compar}
  \label{lem:compar}
Say we are given a cross-polytope embedding $\varphi :\Delta_{2d}\to P^{\oplus}$ and induced loss $L^{2}_{\varphi}$.
Let $(v_{a_i}, v_{b_i})$, be the $i^{th}$ diagonal pair (i.e. $\varphi(\delta_{a_i}) = v_{a_i}$). Define the property $\Gamma^{\varphi}:\Delta_{2d} \to \B$ element-wise by 
$$\Gamma^{\varphi}(p)_i:=\left\{
    \begin{array}{lr}
        (<, a_i, b_i) &\text{if  } p_{a_i}<p_{b_i}\\
        (>, a_i, b_i) &\text{if  } p_{a_i}>p_{b_i}\\
        (=, a_i, b_i) &\text{if  }p_{a_i}=p_{b_i}.
    \end{array}
    \right.
$$ 
Furthermore define the link $\psi^{P^\oplus}:\reals^{d}\to \B$ with respect to each diagonal pair as
\begin{align*}
\psi(u; v_{a_i}, v_{b_i})_{i}^{P^{\oplus}} := \left\{
    \begin{array}{lr}
        (<, a_i, b_i) &\text{if  } ||u-v_{a_i} ||_{2} >  ||u-v_{b_i} ||_{2} \\
        (>, a_i, b_i) &\text{if  } ||u-v_{a_i} ||_{2}<  ||u-v_{b_i} ||_{2}\\
        (=, a_i, b_i) & \text{o.w.} 
    \end{array}
    \right.
\end{align*}
Then $(L_\varphi^{2}, \psi^{P^{\oplus}})$ elicits $\Gamma^{\varphi}$.
\end{restatable}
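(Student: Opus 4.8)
The goal is to show that for each $p \in \Delta_{2d}$, applying $\psi_\varphi$ to the (unique) minimizer of $\E_{Y \sim p}[L^G_\varphi(u,Y)]$ recovers $\Gamma^\varphi(p)$ coordinate-by-coordinate. The plan is to combine Proposition~\ref{thm:embedmin}, which identifies the minimizer as $u^* = \varphi(p)$, with a geometric comparison between $\varphi(p)$ and a diagonal pair $(v_{a_i}, v_{b_i}) = (v_{a_i}, -v_{a_i})$ of the cross-polytope.

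First I would invoke Proposition~\ref{thm:embedmin} to write $u^* := \argmin_{u \in \reals^d} \E_{Y\sim p}[L^G_\varphi(u,Y)] = \varphi(p) = \sum_{y \in \Y} p_y v_y$, which is unique, so $(L^G_\varphi, \psi_\varphi)$ elicits the property $p \mapsto \psi_\varphi(\varphi(p))$. It then suffices to check that $\psi_\varphi(\varphi(p))_i = \Gamma^\varphi(p)_i$ for each diagonal pair index $i \in [d]$. Fix such an $i$ and let $v := v_{a_i}$, so that $v_{b_i} = -v$ (the defining property of a diagonal vertex pair of $P^\oplus$), and note $\|v\|_2 = 1$. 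The quantity the link thresholds on is the sign of $\|u^* - v\|_2^2 - \|u^* + v\|_2^2$. Expanding, $\|u^* - v\|_2^2 - \|u^* + v\|_2^2 = -4\langle u^*, v\rangle$, so the link's decision is governed entirely by the sign of $\langle u^*, v\rangle = \langle \varphi(p), v_{a_i}\rangle$.

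Next I would compute $\langle \varphi(p), v_{a_i}\rangle = \sum_{y \in \Y} p_y \langle v_y, v_{a_i}\rangle$. Here is the key structural fact about the cross-polytope: its vertices are $\{\pm e_1, \dots, \pm e_d\}$ (up to the fixed bijection), so distinct vertices are either antipodal (inner product $-1$) or orthogonal (inner product $0$), and $\langle v_{a_i}, v_{a_i}\rangle = 1$. Hence all terms vanish except $y = a_i$ (contributing $p_{a_i}$) and $y = b_i$ (contributing $-p_{b_i}$), giving $\langle \varphi(p), v_{a_i}\rangle = p_{a_i} - p_{b_i}$. Therefore $\|u^* - v_{a_i}\|_2 > \|u^* - v_{b_i}\|_2 \iff \langle \varphi(p), v_{a_i}\rangle < 0 \iff p_{a_i} < p_{b_i}$, and symmetrically for the other two cases; this is exactly the definition of $\Gamma^\varphi(p)_i$, matching all three branches ($<$, $>$, $=$). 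Applying this for every $i \in [d]$ and assembling the coordinates yields $\psi_\varphi(\varphi(p)) = \Gamma^\varphi(p)$, completing the proof.

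The only mild subtlety — and the step I would be most careful about — is the orthogonality/antipodality of distinct cross-polytope vertices: one must confirm that the standard presentation $P^\oplus = \conv\{\pm e_j\}$ is what the construction $\conv\{\pi((\pm 1, 0, \dots, 0)) : \pi \in \Sc_d\}$ produces, so that the inner products collapse as claimed; everything else is a short expansion of squared norms and is routine.
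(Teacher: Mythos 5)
Your proposal is correct and follows essentially the same route as the paper's proof: both reduce the link's decision to the sign of the first coordinate of $\varphi(p)$ (equivalently, of $\langle \varphi(p), v_{a_i}\rangle$) by expanding the two squared distances, and both use the orthogonality/antipodality of cross-polytope vertices to identify that quantity as $p_{a_i}-p_{b_i}$. The only difference is presentational — you argue the three cases directly via one signed quantity, whereas the paper runs the first two cases by contradiction and derives the equality case from the resulting biconditionals.
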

%%%%%%%%%%%%%%%%%%%%%%

\begin{figure}[t]
\centering
  \begin{minipage}[t]{0.2\textwidth}
  \end{minipage}
  \hfill
\begin{minipage}[t]{0.3\textwidth}
	\centering
	\includegraphics[width=\textwidth]{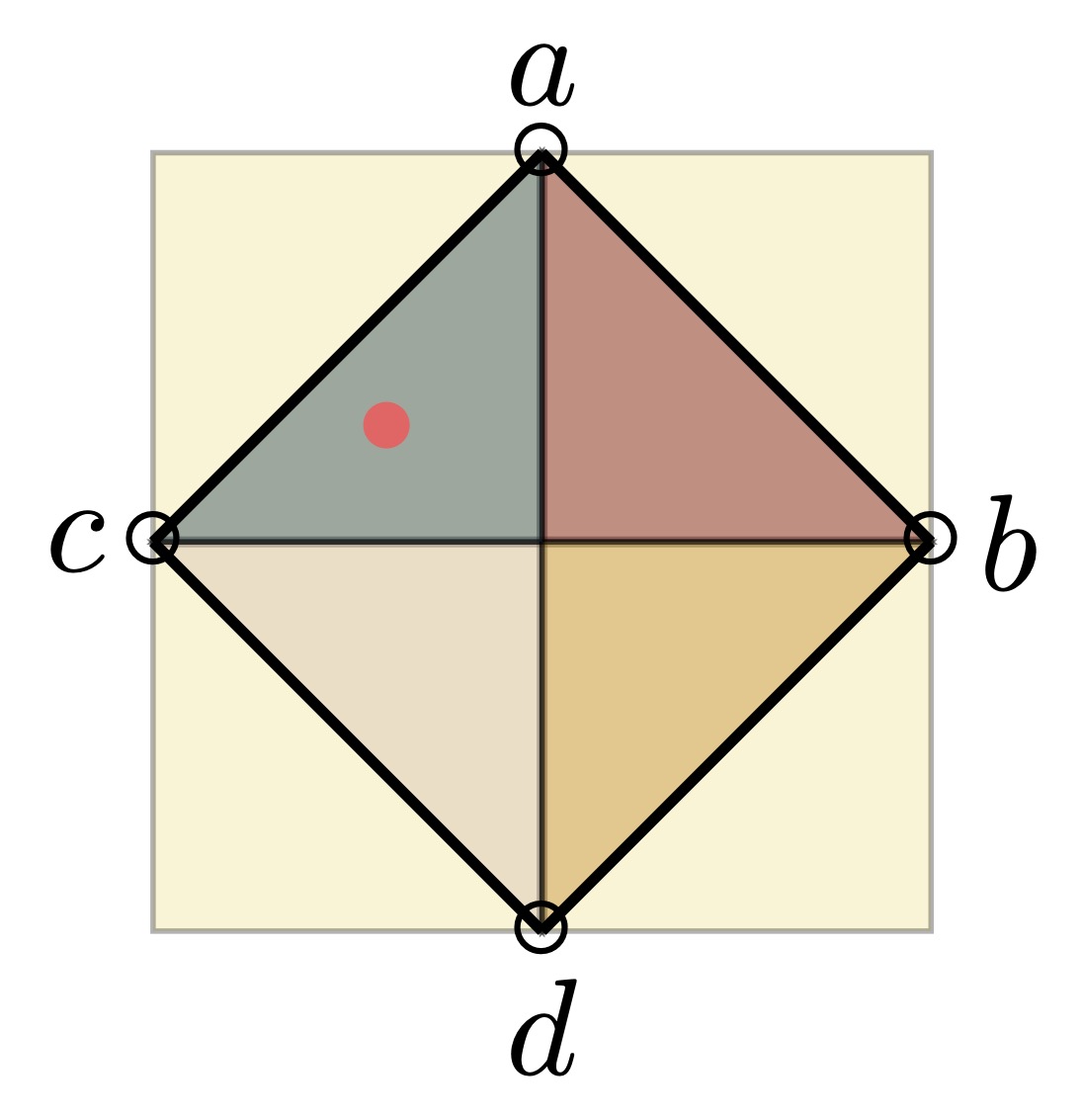}
	%	\caption{$\hat \Psi(u)$ with $\epsilon=\frac{1}{4}$}
	%	\label{fig:can_link_k2}
\end{minipage}
\hfill
%\captionsetup{justification=centering}
  \begin{minipage}[t]{0.3\textwidth}
    \includegraphics[width=\textwidth]{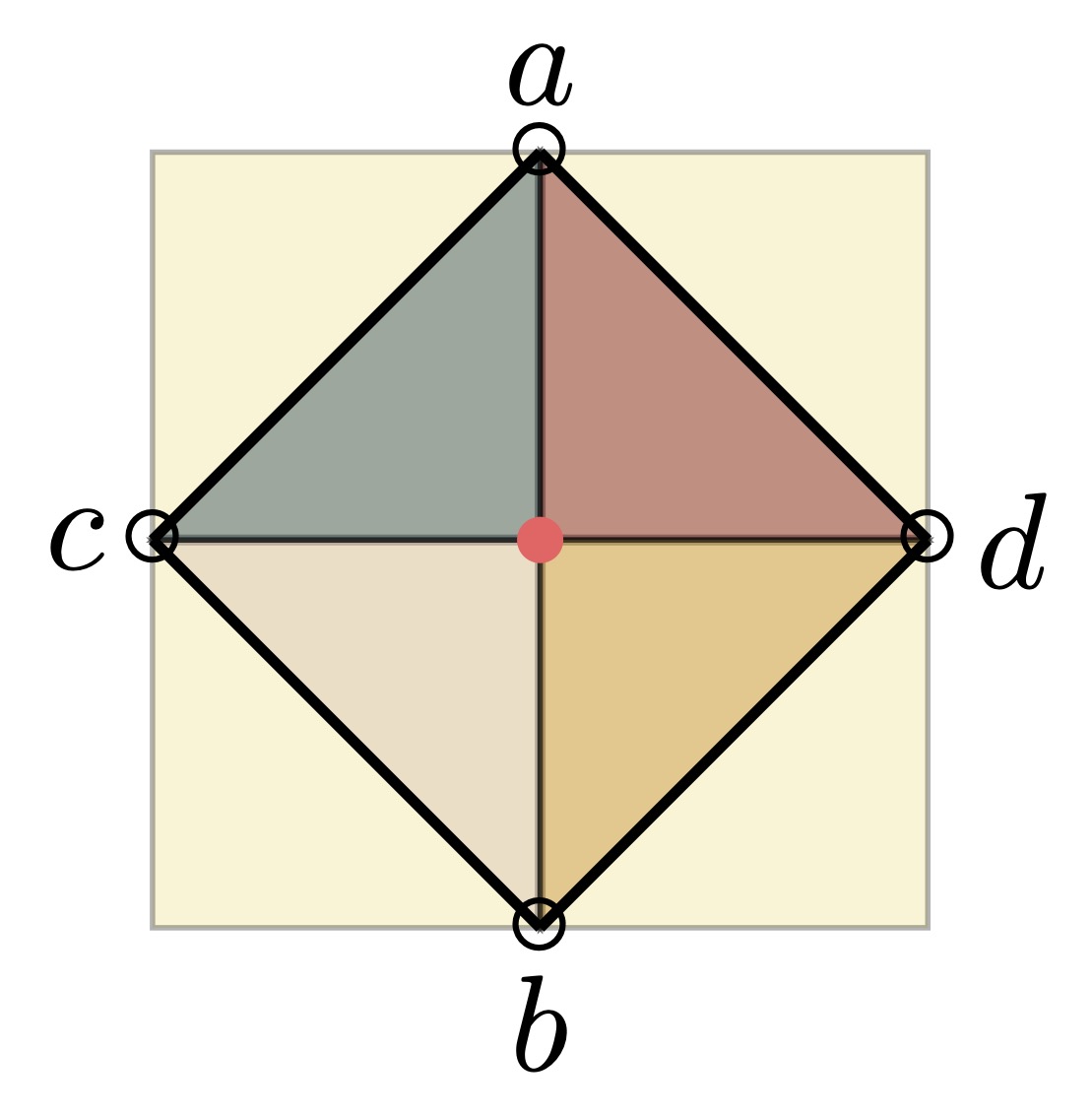}
    %\caption{$\psi^*_\epsilon$ from Definition~\ref{def:link-max-istar} with $\epsilon=\frac{1}{4}$}
  \end{minipage}
  \hfill
  \begin{minipage}[t]{0.2\textwidth}
  \end{minipage}
  \caption{Four outcomes embedded in $\reals^2$ in two different ways, with the minimizing reports \textcolor{red}{$\bullet$} for a distribution $p$." (Left) Configuration $\varphi_{1}$ with \textcolor{red}{$\bullet$} at $(-.5,.3)$ implying $p_a>p_d$ and $p_b>p_c$. (Right) Configuration $\varphi_{2}$ with \textcolor{red}{$\bullet$} at $(0,0)$ implying $p_a=p_b$ and $p_c=p_d$.  This implies the true distribution is $p = (0.4,0.4,0.1,0.1)$." } 
\label{fig:cross}
\end{figure}
%%%%%%%%%%%%%%%%%%%%%%%

The following theorem states that by using multiple problem instances, based on Lemma \ref{lem:compar}, we can Polytope-elicit the mode.
Algorithm \ref{alg:bsort} outlines how to aggregate the individual solutions to infer the mode. 
We defer the proof to Appendix \ref{app:strict}.
\begin{restatable}{theorem}{multiinstance}
    \label{thm:multi-instance}
    Let $d \geq 2$. The mode is $(2d, d, m)$-Polytope Elicitable for some $m \in [2d-1, d(2d-1)]$.
\end{restatable}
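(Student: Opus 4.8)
The plan is to combine Lemma~\ref{lem:compar} with a classical edge-decomposition of the complete graph. Each cross-polytope embedding $\varphi^{j}:\Delta_{2d}\to P^{\oplus}$ partitions the $2d$ outcomes into the $d$ antipodal (``diagonal'') vertex pairs of $P^{\oplus}$, i.e.\ into a perfect matching of the complete graph $K_{2d}$ on vertex set $\Y$; and by Lemma~\ref{lem:compar} the induced pair $(L^{G}_{\varphi^{j}},\psi_{\varphi^{j}})$ elicits, for each such pair $\{a,b\}$, the sign of $p_{a}-p_{b}$ (encoded as one of $(<,a,b)$, $(>,a,b)$, $(=,a,b)$). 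So a single problem instance reveals the $d$ pairwise comparisons indexed by one perfect matching. If the embeddings are chosen so that every edge of $K_{2d}$ lies in some matching, the aggregate profile records $\sign(p_{y}-p_{y'})$ for every pair $y\neq y'$, and this data determines $\modeprop(p)=\{y\in\Y\mid p_{y}\ge p_{y'}\text{ for all }y'\in\Y\}$ exactly.

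First I would fix a $1$-factorization of $K_{2d}$: since $2d$ is even, $K_{2d}$ decomposes into $2d-1$ edge-disjoint perfect matchings $M_{1},\dots,M_{2d-1}$ (the round-robin tournament schedule). For each $j$ write $M_{j}=\{\{a^{j}_{1},b^{j}_{1}\},\dots,\{a^{j}_{d},b^{j}_{d}\}\}$ and build a bijection between $\Y$ and $\vertex(P^{\oplus})$ that sends $a^{j}_{i}$ and $b^{j}_{i}$ to the two vertices of the $i$-th diagonal pair of $P^{\oplus}$; let $\varphi^{j}$ be the resulting polytope embedding. These $2d-1$ embeddings are pairwise distinct because the $M_{j}$ are distinct. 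Let $L^{G}_{\varphi^{j}}$ be the induced loss, $\psi^{j}:=\psi_{\varphi^{j}}$ the link of Lemma~\ref{lem:compar}, and $\Gamma_{j}:=\Gamma^{\varphi^{j}}:\Delta_{\Y}\toto\B_{j}$ the associated comparison property, so that $(L^{G}_{\varphi^{j}},\psi^{j})$ elicits $\Gamma_{j}$ on all of $\Delta_{\Y}$. Note $n>d+1$ and $d<n-1$ reduce to $d\ge 2$, which is the standing hypothesis, so each $\varphi^{j}$ is a legitimate polytope embedding.

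Next I would define the aggregator $\Upsilon:\B_{1}\times\cdots\times\B_{2d-1}\toto\Y$. Because $\{M_{1},\dots,M_{2d-1}\}$ covers $E(K_{2d})$, the tuple $(\Gamma_{1}(p),\dots,\Gamma_{2d-1}(p))$ records, for each unordered pair $\{y,y'\}$, exactly one label telling whether $p_{y}<p_{y'}$, $p_{y}>p_{y'}$, or $p_{y}=p_{y'}$. Define $\Upsilon$ on such a profile to output the set of outcomes $y\in\Y$ for which no recorded comparison involving $y$ has the form ``$p_{y'}>p_{y}$'' — that is, the set of $p$-maximal outcomes. Then for every $p\in\Delta_{\Y}$,
$$\Upsilon\bigl(\Gamma_{1}(p),\dots,\Gamma_{2d-1}(p)\bigr)=\{y\in\Y\mid p_{y}\ge p_{y'}\ \text{for all}\ y'\in\Y\}=\modeprop(p),$$
so the mode is $(2d,d,2d-1)$-Polytope Elicitable with respect to $\Delta_{\Y}$, and since $2d-1$ lies in $[2d-1,d(2d-1)]$ this proves the theorem.

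The main obstacle is not any single deep step but the bookkeeping that glues Lemma~\ref{lem:compar} to the combinatorial design: one must check that the antipodal-pair structure of $P^{\oplus}$ can realize an \emph{arbitrary} perfect matching on $\Y$ (so that a $1$-factorization is usable), and that pairwise-comparison data — crucially including the equality labels — pins down the mode \emph{set} rather than a single mode element, which is exactly why $\Upsilon$ is phrased as ``set of maximal outcomes.'' If one prefers to avoid invoking $1$-factorization, the cruder choice of one cross-polytope instance per edge of $K_{2d}$ still covers all comparisons and uses $d(2d-1)$ instances, the upper endpoint of the stated range; either way some $m\in[2d-1,d(2d-1)]$ works.
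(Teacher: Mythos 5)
Your proposal is correct and follows essentially the same route as the paper: use Lemma~\ref{lem:compar} to extract the $d$ pairwise comparisons encoded by the diagonal pairs of each cross-polytope instance, choose the embeddings so that every pair in $\binom{\Y}{2}$ is compared, and aggregate with a max-finding $\Upsilon$. The one substantive difference is at the $2d-1$ endpoint: the paper's proof only constructs the na\"ive $d(2d-1)$-instance scheme (one instance per pair) and then \emph{assumes} that non-redundant embeddings exist to reach $2d-1$, whereas you actually exhibit them by invoking the $1$-factorization of $K_{2d}$ into $2d-1$ edge-disjoint perfect matchings and checking that any perfect matching of $\Y$ can be realized as the antipodal-pair structure of $P^{\oplus}$. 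That makes your argument strictly more complete than the paper's on the lower end of the stated range (though either endpoint alone suffices for the theorem as stated, since it only asks for \emph{some} $m$ in the interval). Your care in having $\Upsilon$ return the full set of $p$-maximal outcomes, using the equality labels, also matches what the mode property requires and is handled only implicitly by the paper's appeal to a comparison sort.
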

\begin{comment}
We sketch the proof, leaving details to Appendix \ref{app:strict}.
\begin{proof}
We will elicit the mode via the intermediate properties, $\Gamma^{\varphi^j}$, defined in Lemma \ref{lem:compar}.
Each intermediate property gives a set of rankings of the probability of outcomes, $p_{a_i}, p_{b_i}$, corresponding to diagonal pairs of the cross polytope.
In the worst case, we need every comparison between every outcome, ${2d \choose 2} = d(2d-1)$ comparison pairs.
Na\"ively, we need at most $ d(2d-1)$ problem instances, one for each unique pair. But, by choosing each $\psi^j$ and thus $\Gamma^{\varphi^j}$ carefully, we could reduce the number of problem instances to $(2d-1)$.
%\bo{This is the same number, is that a typo?}
With all comparisons in hand, we then use a sorting algorithm to compute the set of $r \in \R$ such that $p_r$ is maximum.
\end{proof} %%%%%%%%%%%%%
\end{comment}

%%%%%%%%%%%%%%%%%%%%%%%%

\begin{algorithm}
\caption{Elicit mode via comparisons and the $d$-Cross Polytopes}\label{alg:bsort}
\begin{algorithmic}
\Require $M = \{ (L^{2}_{\varphi_j},\psi^{P^{\oplus}}_j) \}_{j=1}^{m}$
\State Learn a model $h_j:\X\to \reals^{d}$ for each instance  $(L^{2}_{\varphi_j},\psi^{P^{\oplus}}_j) \in M$
\State For some fixed $x\in\X$, collect all $B_j \gets \psi^{P^{\oplus}}_j(h_j(x))$ where $B_j\in\B_j$ 
\State Report $ R \gets \text{FindMaxes\footnotemark[2]}( B_1,\dots , B_{m} )$
\end{algorithmic}
\end{algorithm}
\footnotetext[2]{Given all comparisons, a sorting algorithm can be used to compute the set of $r \in \R$ such that $p_r$ is maximum.}

%%%%%%%%%%%%%%%%%%%%%%%
Although Theorem \ref{thm:multi-instance} states that the mode is $(2d, d, m)$-Polytope Elicitable for some $m \in [2d - 1, d(2d -
1)]$, it does not state how we select said $\{ (L^{2}_{\varphi_j},\psi^{P^{\oplus}}_j) \}_{j=1}^{m}$ problem instances in an optimal manner.
Unfortunately, selecting the min number of problem instances reduces to a a minimum set cover problem which is computationally hard. 
Even so, through a greedy approach, one can choose problem instances that are log approximate optimal relative to the true best configuration.  
In practice using real data, given that these are asymptotic results, we may have conflicting logic for the provided individual reports.
In Appendix \ref{app:multprob}, we discuss an approach of how to address this in practice.

%%%%%%%%%%%%%%

%%%%%%%%%%%%%%
\section{Discussion and Conclusion}
This work examines various tradeoffs between surrogate loss dimension, restricting the region of consistency in the simplex when using the 0-1 loss, and number of problem instances.
Since our analysis is based on an embedding approach commonly used in practice, our work provides theoretical guidance for practitioners choosing an embedding. 
We see several possible future directions.
The first is a deeper investigation into hallucinations.
Future work could investigate the size of the hallucination region in theory, and the frequency of reports in the hallucination region in practice.
%This could be in the form of a general characterization of what losses and embedding procedures incur hallucinations. 
%\bo{Instead of the previous sentence, maybe something that builds on our work? Suggest: "Future work could investigate the size of the hallucination region in theory, and the frequency of reports in the hallucination region in practice."}
Another direction would be to construct a method that efficiently identifies the strict calibration regions and the distributions embedded into them.
This would provide better guidance on whether or not a particular polytope embedding aligns with one's prior over the data.
Another possible direction would be to explore whether concepts from this paper could be applied to the underlying problem of cost-sensitive multiclass classification. 
Finally, another direction is to identify other properties that can be elicited via multiple problem instances while also reducing the dimension of any one instance. 

{\bf Broader Impacts:} Our work broadly informs the selection of loss functions for machine learning.
Thus our work may influence practitioners’ choice of loss function. 
Of course, such loss functions can be used for ethical or unethical purposes. 
We do not know of particular risks of negative impacts of this work beyond risks of machine learning in general.

%.\bo{and minimize the total dimension}

%% Bo: went ahead and commented for submission, better safe than sorry
%\medskip
%\paragraph*{Acknowledgements.}
%We thank Rafael Frongillo for their insights on hallucinations and Amzi Jeffs for discussions on convex geometry. \rick{remove for submission}

%%%%%%%%%%%%%%

%%%%%%%%%%%%%%%%%%
%%%%%%%%%%%%%%%%%%

%%%%%%%%%%%%%%

\subsection*{Acknowledgments}
We thank Rafael Frongillo for discussions about hallucinations, which led to the exploration of many of the ideas in
this work and Amzi Jeffs for discussions regarding convex geometry. This material is based upon work supported by
the National Science Foundation under Award No. 2202898 (JF).

\bibliographystyle{plainnat}
\bibliography{bib}
%%%%%%%%%%%%%%

%\crefalias{section}{appendix} % uncomment if you are using cleveref
\newpage 
\appendix

%%%%%%%%%%%%%%%%%%%%%%%%%
%%%%%%%%%%%%%%%%%%%%%%%%%%

\section{Notation tables} \label{app:notation}

\begin{table}[h]
	\centering
	\begin{tabular}{ll}
		Notation & Explanation \\
		\toprule
		$r\in\mathcal{R}$ & Prediction space \\
            $y\in\Y$ & Label space \\
            $\mathcal{P}\subseteq \Delta_{\Y}$ & Subset of simplex over $\Y$ \\
            $\delta_{y}$ & Point mass distribution on $y\in\Y$ \\ 
            $[d]:= \{1,\dots ,d \}$ & Index set \\
            $\ones_{S}\in \{0,1\}^{d}$ s.t. $(\ones_S)_i =1 \Leftrightarrow i\in S$ & 0-1 Indicator on set $S\subseteq [d]$ \\
            $\C\subset \reals^{d}$ & Closed convex set \\
            $u\in\reals^{d}$ & Surrogate prediction space \\ 
            $\text{Proj}_{\C}(u) := \argmin_{x\in \C} ||u-x||_{2}$ & Projection onto closed convex set \\   
            $\pi \in \Sc_{d}$ & Permutations of $[d]$ \\
		$\ell :\R\times \Y \to \reals_{+}$  & Discrete loss \\
            $L :\reals^{d}\times \Y \to \reals$  & Surrogate loss \\
            $\psi :\reals^{d}\to \Y$ & Link function \\ 
             $\E_{Y\sim p} [\ell (r,Y)]$ & Expected discrete loss \\ 
             $\E_{Y\sim p} [L (u,Y)] $ & Expected surrogate loss \\ 
             $\Gamma :\mathcal{P}  \to 2^\mathcal{R}\setminus \{\emptyset \}$ & Property \\ 
             $\Gamma_r := \{ p\in\mathcal{P} \mid r = \Gamma (p) \}$ & Level set of property \\ 
             $\prop [L]$ & Property elicited by $L$ \\
             $\ell_{0-1 }:\Y \times \Y \to \{ 0,1  \}$ & Zero-one loss \\
             $\modeprop :\Delta_{\Y}\to 2^{\Y}\setminus \{\emptyset\}$ & Mode property \\ 
		\bottomrule
	\end{tabular}
\caption{Table of general notation}\label{tab:notation}
\end{table}

\begin{table}[h]
	\centering
	\begin{tabular}{ll}
		Notation & Explanation \\
		\toprule
		$P\subset \reals^{d}$ & Polytope \\
            $\vertex (P)$ & Vertex Set \\ 
            $E(P)$ & Edge set of $P$ \\ 
            $\varphi: \simplex \to P$ & Polytope Embedding \\ 
            $P^{\square} := \conv (\{-1,1\}^{d})$ & Unit cube \\ 
            $P^{w} := \conv ( \{ \pi (w)  \mid \pi \in\Sc_{d} \} )$ s.t. $w\in\reals^{d}$ & Permutahedron \\ 
            $P^{\oplus} := \conv (\{ \pi ((\pm {1},0,\dots , 0)) \mid \pi\in \Sc_d\} )$ & Cross polytope \\ 
            $L^{2}:\reals^{d}\times \Y \to \reals_{+}$ & Square Loss \\ 
            $L_{\varphi}^{2}: \reals^{d}\times \Y \to \reals_{+}$ & $(L^2 ,\varphi )$ Induced Loss \\
            $\maplink  : \reals^{d} \to \Y$ & MAP Link \\ 
		\bottomrule
	\end{tabular}
\caption{Table of polytope and embedding notation}\label{tab:polynotation}
\end{table}

\begin{table}[h]
	\centering
	\begin{tabular}{ll}
		Notation & Explanation \\
		\toprule
		$\H \subseteq P$ & Hallucination region \\
            $R\subset P$ & Strict calibrated region \\ 
            $R_y\coloneqq R \cap \psi_y $ & Intersection of link level set and strict cal. region \\ 
            $R_{\Y} :=\cup_{y\in\Y}R_y$ & Union of $R_y$ \\
            $\Theta_{\alpha}\subset \Delta_{\Y}$ & Low-noise assumption \\ 
            $\Psi^{y}_{\alpha}=\{ (1-\alpha )\delta_y +\alpha \delta_{\hat{y}} \mid \hat{y}\in \Y  \}$ & Scaled vertex set \\ 
            $P_{\alpha}^{y}:=\varphi (\conv (\Psi^{y}_{\alpha}))$ & Scaled version of $P$ anchored at $v_y$\\
            
		\bottomrule
	\end{tabular}
\caption{Table of calibration region notation}\label{tab:calnotation}
\end{table}

%%%%%%%%%%%%%%%%%%%%%%%%%%%%

% \crefalias{section}{appendix} % uncomment if you are using cleveref
\newpage
\section{Polytopes, Omitted Proofs, and Results}\label{app:strict}

\subsection{Polytopes}\label{app:polyapp}
A Convex Polytope $P\subset \reals^{d}$, or simply a polytope, is the convex hull of a finite number of points $u_1,\dots ,u_n\in\reals^{d}$.
An extreme point of a convex set $A$, is a point $u\in A$ such that if $ u = \lambda y + (1 - \lambda  )z$ with $y,z \in A$ and $\lambda \in [0, 1]$, then $y = u$ and/or $z = u$. 
We shall denote by $\vertex (P)$ a polytope's set of extreme points.
A polytope can be expressed by the convex hull of its extreme points, i.e. $P=\conv (\vertex (P) )$ \citep[Theorem 7.2]{brondsted2012introduction}.

%%%%%%%%%%%%%%%%%%%%%%%
We define the dimension of $P$ via $\dim (P) :=\dim (\affhull  (P))$ where $\affhull  (P)$ denotes the smallest affine set containing $P$. 
A set $F\subseteq P$ is a face of $P$ is there exists a hyperplane $H(y,\alpha ) :=\{ u\in\reals^{d}\mid \langle u,y \rangle = \alpha  \}$ such that $F=P\cap H$ and $P\subseteq H^+$ such that $H^{+}(y,\alpha ) :=\{ u\in\reals^{d}\mid \langle u,y \rangle \leq \alpha  \}$.
Let $F_i(P)$ where $i\in [d-1]$ denote set of faces of dim $i$ of a polytope $P$.
A face of dimension zero is called a vertex and a face of dimension one is called an edge.
We define the edge set of a polytope $P$ by $E(P):=\{ \conv ( (v_i,v_j))  \mid (v_i,v_j) \subseteq  \binom{\vertex (P)}{2} , \conv  ((v_i,v_j )) \in F_{1}(P)  \} $.
We define the neighbors of a vertex $v$ by $\neigh (v ;P) := \{\hat{v}\in \vertex (P) \mid \; \conv (( v,\hat{v} ))  \in E(P)   \}$.
We will denote $\conv ( (v,\hat{v}))  \in E(P)$ by as $e_{v,\hat{v}}$ and $\neigh (v ;P)$ by $\neigh (v)$ when clear from context.
%\jessiet{I think a lot of this is now used only in proofs and not in the body. I think if it's used sparingly, we can just define it in the proof environment where relevant}

%%%%%%%%%%%%%%%%%%%%%%%%
\subsection{Omitted Proofs from \S~\ref{sec:calregions}}

\embedmin*
\begin{proof}
For any fixed $p\in\Delta_{\mathcal{Y}}$, observe solving for the minimizer of $\E_{Y\sim p} [L_{\varphi}^{2}(u,Y) ] $ we get
$$\frac{d}{du_i} \E_{Y\sim p} [L_{\varphi}^{2}(u,Y) ] = \frac{d}{du_i} \sum_{y\in\mathcal{Y}} \frac{p_y}{c} \|u -v_y\|^{2}_{2}+f(y)= \frac{2}{c} \sum_{y\in\mathcal{Y}}p_y(u_i-v_{y,i}) $$
$$\Rightarrow \nabla_u \E_{Y\sim p} [L_{\varphi}^{2}(u,Y) ]=  \frac{2}{c}\sum_{y\in\mathcal{Y}}  p_y u- \frac{2}{c}\sum_{y\in\mathcal{Y}}  p_yv_y = 0 $$
$$u^* = \sum_{y\in\mathcal{Y}}  p_yv_y =  \varphi (p)~.$$
Thus, by the construction of the polytope embedding, it holds that $u^* = \varphi (p)$.
Since Square Loss are strictly convex functions, $u^*$ uniquely minimizes $\E_{Y\sim p}[L_{\varphi}^{2}(u,Y)]$.

Conversely, every $\hat{u}\in P$ is expressible as a convex combination of vertices; hence, by the definition of $\varphi$, for some distribution, say $\hat{p}\in \Delta_{\Y}$, it holds $\hat{u} = \varphi (\hat{p})$.
Therefore, it holds that $\hat{u}$ minimizes $\E_{Y\sim \hat{p}}[L_{\varphi}^{2}(u,Y)]$.
\end{proof}

%%%%%%%%%%%%%%%%%%%%%

\halreal* 

\begin{proof}
Choose a $y\in\Y$.
We abuse notation and write $\vertex (P)\setminus v_y := \vertex (P)\setminus \{v_y\}$.
Observe all $u\in \conv (\vertex (P)\setminus v_y )\cap \maplink_{y}$ can be expressed as a convex combination of vertices without needing vertex $v_{y}$.
The coefficients of said convex combination express a $p\in\Delta_{\Y}$ that is embedded to the point $u\in P$ where $p_y =0$.
Yet, by Proposition \ref{thm:embedmin}, said $u$ is an expected minimizer of $L^{2}_{\varphi}$ with respect to $p$.
Given the intersection with $\maplink_{y}$ and by Definition \ref{def:hallucination}, it holds that $\cup_{y\in\Y } \conv (\vertex (P)\setminus v_y )\cap \maplink_{y}\subseteq \H$.

We now shall show that $\H \subseteq \cup_{y\in\Y } \conv (\vertex (P)\setminus v_y )\cap \maplink_{y}$.
Fix $y\in\Y$.
Assume there exists a point $u\notin \conv (\vertex (P)\setminus v_y)\cap \maplink_{y}$ such that there exists some $p\in\Delta_{\Y}$ where $\varphi (p)=u$, $p_y =0$, and $\maplink (u)=y$.
Since $\maplink (u)=y$ and $u\notin \conv (\vertex (P)\setminus v_y))\cap \maplink_{y}$, it must be the case that $u\notin \conv (\vertex (P)\setminus v_y)$.
However, that implies that $u$ is strictly in the vertex figure and thus must have weight on the coefficient for $y$.
Thus, forming a contradiction that $p_y=0$ which implies that $\H =\cup_{y\in\Y } \conv (\vertex (P)\setminus v_y )\cap \maplink_{y}$.

To show non-emptiness of $\H$, we shall use Helly's Theorem (\citet{rockafellar1997convex}, Corollary 21.3.2).
W.l.o.g, assign an index such that $\Y =\{ y_{1},\dots ,y_{d},y_{d+1},\dots ,y_n \}$.
Observe the elements of the set $\{ \Y \setminus y_i  \}_{i=1}^{n}$ each differ by one element. 
W.l.o.g, pick the first $d+1$ elements of the previous set.
Observe $|\cap_{i=1}^{d+1}\Y\setminus y_i |=|\Y\setminus \{y_1,\dots ,y_d,y_{d+1}\}|=n-(d+1)>0$. 
%\dk{missing the $i$ index in the expression here, what does it do?\rick{Updated, was missing a set minus in the union elements}}
Hence, by Helly's theorem and uniqueness of $y_i$'s, $\cap_{y\in\Y} \conv (\vertex (P) \setminus v_y) \neq \emptyset$.

Pick a point $u' \in \cap_{y\in\Y} \conv (\vertex (P) \setminus v_y) $.
Since $\maplink$ is well-defined, $u' $ will be linked to some outcome  $y'\in\Y$ and thus $u'\in \conv (\vertex  (P)\setminus v_{y'} )\cap \maplink_{y'}\subset \H$.
 Yet, $u' $ can be expressed as a convex combination which does not use $v_{y'}$ since it lies in $\cap_{y\in\Y} \conv (\vertex (P) \setminus v_y )$.
Thus, by using Proposition \ref{thm:embedmin} and by the definition of Hallucination (Def. \ref{def:hallucination}), we have that $\H\neq \emptyset$. 
\end{proof}

%%%%%%%%%%%%%%%%%%%%%%

\begin{lemma}[Proposition 1.2.4]\citep{hiriart2004fundamentals}\label{lem:polytopoly}
 If $\varphi$ is an affine transformation of $\reals^n$ and $A\subset \reals^n$ is convex, then then the image $\varphi (A)$
is also convex. In particular, if the set $A$ is a convex polytope, the image is also a convex polytope.   
\end{lemma}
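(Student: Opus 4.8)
The plan is to unwind everything to the algebraic form of an affine map and then push convex combinations through it. Write $\varphi(x) = Ax + b$ for a linear map $A$ and a translation vector $b$; every affine transformation of $\reals^n$ has exactly this form, and this representation is all we will use.

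First I would show convexity is preserved. Take two points $y_1, y_2 \in \varphi(A)$, say $y_i = \varphi(x_i)$ with $x_i \in A$, and fix $\lambda \in [0,1]$. A one-line computation gives $\lambda y_1 + (1-\lambda) y_2 = A(\lambda x_1 + (1-\lambda) x_2) + b = \varphi(\lambda x_1 + (1-\lambda) x_2)$, where the translation term comes out correctly precisely because $\lambda + (1-\lambda) = 1$. Since $A$ is convex, $\lambda x_1 + (1-\lambda) x_2 \in A$, so $\lambda y_1 + (1-\lambda) y_2 \in \varphi(A)$; hence $\varphi(A)$ is convex.

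For the polytope claim, suppose $A = \conv(\{u_1,\dots,u_k\})$. I would prove the set identity $\varphi(A) = \conv(\{\varphi(u_1),\dots,\varphi(u_k)\})$, which exhibits $\varphi(A)$ as the convex hull of finitely many points and therefore as a polytope. The inclusion $\conv(\{\varphi(u_i)\}) \subseteq \varphi(A)$ is immediate: each $\varphi(u_i)$ lies in $\varphi(A)$, and $\varphi(A)$ is convex by the previous paragraph, so it contains the convex hull of those points. For the reverse inclusion, take $y = \varphi(x)$ with $x = \sum_{i} \lambda_i u_i$, $\lambda_i \geq 0$, $\sum_i \lambda_i = 1$; then $\varphi(x) = A\big(\sum_i \lambda_i u_i\big) + b = \sum_i \lambda_i (A u_i + b) = \sum_i \lambda_i \varphi(u_i)$, again using $\sum_i \lambda_i = 1$ to distribute $b$ across the terms. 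Thus $y \in \conv(\{\varphi(u_i)\})$, completing the identity.

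There is no real obstacle here; the only point demanding a little care is the bookkeeping of the translation $b$ under convex combinations, which works because the coefficients sum to one — the same manipulation would fail for arbitrary (non-affine) combinations, which is exactly why the hypothesis is affineness rather than linearity. One could instead simply invoke the cited reference, but the self-contained argument above is short enough to record in full.
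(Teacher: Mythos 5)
Your argument is correct and is the standard one. The paper does not prove this lemma at all --- it simply cites it as Proposition~1.2.4 of Hiriart-Urruty and Lemar\'echal --- so your self-contained derivation (write $\varphi(x)=Mx+b$, push convex combinations through using $\sum_i \lambda_i = 1$, and identify $\varphi(\conv\{u_1,\dots,u_k\})$ with $\conv\{\varphi(u_1),\dots,\varphi(u_k)\}$ by the two inclusions) supplies exactly what the citation stands in for. The only blemish is notational: the lemma already uses $A$ for the convex \emph{set}, and you reuse $A$ for the \emph{matrix} of the affine map, so ``since $A$ is convex'' momentarily refers to a different object than ``$A(\lambda x_1+(1-\lambda)x_2)$''; rename the matrix to avoid the clash. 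Otherwise nothing is missing.
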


\begin{lemma}
Let $L^2$ be a Square Loss, $\varphi$ be any polytope embedding, $\psi$ be the MAP link, and $L_{\varphi}^{2}$ be the loss induced by $(L^2, \varphi)$.
Assume the target loss is $\ell_{0-1}$. 
    If a point is in a strict calibrated region such that $u\in R_y$ for some $y\in\Y$, it is necessary that $u \in \conv (\{ v_y\}\cup \neigh (v_y))\setminus \conv (\neigh (v_y))$.
\end{lemma}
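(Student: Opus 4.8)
The plan is to show the contrapositive in two halves: first that $u \notin \conv(\{v_y\} \cup \neigh(v_y))$ forces $u$ out of $R_y$, and second that $u \in \conv(\neigh(v_y))$ also forces $u$ out of $R_y$. The unifying idea is that a point in $R_y$ must be linked by $\maplink$ to $y$ \emph{and} must have $y$ as the unique mode of every distribution $p$ with $\varphi(p) = u$; we will exhibit, in each of the two excluded cases, a distribution $\hat p$ with $\varphi(\hat p) = u$ for which $y \notin \modeprop(\hat p)$, contradicting the definition of strict calibrated region (Definition \ref{def:scr}), since $(L_\varphi^G, \maplink)$ cannot then be $\ell_{0-1}$-calibrated at $\hat p$ (the surrogate's unique minimizer is $u = \varphi(\hat p)$ by Proposition \ref{thm:embedmin}, and $\maplink(u) = y \notin \modeprop(\hat p) = \prop[\ell_{0-1}](\hat p)$).

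\textbf{First half ($u \notin \conv(\{v_y\}\cup\neigh(v_y))$).} Here I would argue that whenever $u \in R_y$, the weight on $v_y$ in \emph{any} barycentric representation of $u$ must strictly exceed the weight on every other vertex $v_{\hat y}$ — otherwise there is a distribution $\hat p$ embedding to $u$ with $\hat p_{\hat y} \ge \hat p_y$, so $y \notin \text{mode}(\hat p)$ uniquely, breaking calibration. In particular the coefficient on $v_y$ exceeds $1/n$, so $u$ lies strictly inside the polytope toward $v_y$. The geometric fact I want is: a point $u$ all of whose $\varphi$-preimages put maximal weight on $v_y$ must lie in $\conv(\{v_y\}\cup\neigh(v_y))$; equivalently, if $u \notin \conv(\{v_y\}\cup\neigh(v_y))$, one can find a barycentric representation of $u$ that routes weight through a non-neighbor vertex in a way that makes some $\hat p_{\hat y} \ge \hat p_y$. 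This is essentially the statement that the "star" of the vertex $v_y$ in the polytope's face structure — the union of faces containing $v_y$ — is contained in $\conv(\{v_y\}\cup\neigh(v_y))$, combined with the observation that outside the star, every representation of $u$ can shift all weight off $v_y$ onto the other vertices while keeping at least one coordinate at least as large as the $v_y$-coordinate. I expect this half to be the main obstacle, since it requires care with the face lattice and with the fact that barycentric coordinates are non-unique when $d < n-1$.

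\textbf{Second half ($u \in \conv(\neigh(v_y))$).} This is the easier inclusion: if $u \in \conv(\neigh(v_y))$, then $u$ admits a representation as a convex combination of neighbors of $v_y$ only, i.e.\ a distribution $\hat p$ with $\hat p_y = 0$ and $\varphi(\hat p) = u$. Then $y \notin \modeprop(\hat p)$ trivially, so $(L_\varphi^G, \maplink)$ is not $\ell_{0-1}$-calibrated at $\hat p$ regardless of how $\maplink$ acts on $u$, hence $u \notin R_y$. (Note this overlaps with the hallucination analysis of Theorem \ref{thm:halreal}: such $u$, if linked to $y$, lie in $\H$.) Combining the two halves, $u \in R_y$ implies $u \in \conv(\{v_y\}\cup\neigh(v_y))$ and $u \notin \conv(\neigh(v_y))$, which is exactly $u \in \conv(\{v_y\}\cup\neigh(v_y)) \setminus \conv(\neigh(v_y))$.
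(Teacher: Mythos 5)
Your contrapositive decomposition is exactly the paper's approach: the paper's (two-sentence) proof observes that if $u\in R_y$ but $u\in P\setminus\big(\conv (\{v_y\}\cup \neigh (v_y))\setminus \conv (\neigh (v_y))\big)$, then $u$ admits a convex representation of the vertices with zero weight on $v_y$, hence some $\hat p\in\varphi^{-1}(u)$ has $\hat p_y=0$, so $y\notin\modeprop (\hat p)$ while $u\in R_y$ forces $\maplink (u)=y$, contradicting calibration at $\hat p$. Your second half is precisely this argument restricted to $\conv (\neigh (v_y))$ and is complete. Your first half, however, is an unnecessary detour: you do not need to compare the weight on $v_y$ against the weights on the other vertices or against $1/n$, nor to reason about maximal-weight representations. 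The only fact needed --- and the one the paper itself asserts without further justification --- is the purely geometric containment $P\setminus\conv (\{v_y\}\cup \neigh (v_y))\subseteq\conv (\vertex (P)\setminus\{v_y\})$; once $u$ has a representation with zero weight on $v_y$, the argument is identical to your second half, since $\hat p_y=0$ immediately excludes $y$ from the mode. So the ``main obstacle'' you flag is real but smaller than you make it, and it coincides with the step the paper leaves implicit; to exceed the paper's rigor you would show that every facet of $\conv (\vertex (P)\setminus\{v_y\})$ visible from $v_y$ has all its vertices in $\neigh (v_y)$, so that $P$ decomposes as $\conv (\vertex (P)\setminus\{v_y\})\cup\conv (\{v_y\}\cup \neigh (v_y))$. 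One small wording correction in your second half: calibration does not fail ``regardless of how $\maplink$ acts on $u$''; it fails because $R_y\subseteq R\cap\maplink_y$ pins $\maplink (u)=y$ while $y\notin\modeprop (\hat p)$.
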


\begin{proof}
    If $u\in R_y$ and $u\in P\setminus \big( \conv (\{ v_y\}\cup \neigh (v_y))\setminus \conv (\neigh (v_y))\big)$, then $u$ can be expressed as a convex combination which has no weight on the coefficient for $v_y$.
    Hence, there exists a distribution embedded into $u$ where $y$ would not be the mode, thus violating the initial claim that $u\in R_y$.
\end{proof}

%%%%%%%%%%%%%%%%%%%%%%%

\begin{lemma}\label{lemma:bdpw}
Let $L^2$ be a Square Loss, $\varphi$ be any polytope embedding, $\psi$ be the MAP link, and $L_{\varphi}^{2}$ be the loss induced by $(L^2, \varphi)$.
For any $u\in e_{(v_i,v_j)}\in E(P)$, it holds that  $|\varphi^{-1}(u)|=1$.
\end{lemma}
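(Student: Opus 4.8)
The claim is that any point $u$ lying on an edge $e_{(v_i,v_j)}$ of the polytope $P$ has a unique preimage under the embedding $\varphi$; equivalently, the unique distribution $p$ with $\varphi(p) = u$ is supported on $\{i,j\}$ (reading $v_i = \varphi(\delta_{y_i})$, etc.). The plan is to exploit the fact that an edge is a $1$-dimensional face of $P$, so it is cut out by a supporting hyperplane whose intersection with $P$ is exactly $e_{(v_i,v_j)}$, and then push the face structure back through the linear map $\varphi$.

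\textbf{Key steps.} First, I would invoke the definition of a face: there is a hyperplane $H(a,\beta) = \{x : \langle a, x\rangle = \beta\}$ with $e_{(v_i,v_j)} = P \cap H$ and $P \subseteq H^+(a,\beta)$, i.e.\ $\langle a, v_y\rangle \le \beta$ for all $y \in \Y$, with equality exactly when $v_y \in \{v_i, v_j\}$ (this last point: for $y \notin \{i,j\}$ we have $v_y \notin e_{(v_i,v_j)}$, and since $v_y$ is a vertex of $P$ not on the face it cannot lie on $H$, so the inequality is strict). Second, fix any $p \in \varphi^{-1}(u)$, so $u = \sum_{y} p_y v_y$ with $p \in \Delta_\Y$; compute $\langle a, u\rangle = \sum_y p_y \langle a, v_y\rangle$. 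Since $u \in H$, the left side equals $\beta$, while the right side is a convex combination of the values $\langle a, v_y\rangle$, all of which are $\le \beta$ and strictly less than $\beta$ for $y \notin \{i,j\}$. Hence $p_y = 0$ for every $y \notin \{i,j\}$, so $p = p_i \delta_{y_i} + p_j \delta_{y_j}$ with $p_i + p_j = 1$. Third, it remains to see that $p_i$ (and hence $p_j$) is determined by $u$: since $v_i \ne v_j$, the map $t \mapsto t v_i + (1-t) v_j$ from $[0,1]$ onto $e_{(v_i,v_j)}$ is injective (affine with nonzero direction $v_i - v_j$), so $u = p_i v_i + p_j v_j$ pins down $p_i$ uniquely. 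Therefore $|\varphi^{-1}(u)| = 1$.

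\textbf{Main obstacle.} There is no deep difficulty here; the only point requiring a little care is justifying that $\langle a, v_y\rangle < \beta$ \emph{strictly} for $y \notin \{i,j\}$ — that is, that no vertex other than $v_i, v_j$ lies on the supporting hyperplane defining the edge. This follows because such a $v_y$, being a vertex of $P$, is not in the relative interior of any positive-dimensional face, and if it lay on $H$ it would be in the face $P \cap H = e_{(v_i,v_j)}$, forcing $v_y \in \{v_i, v_j\}$ (an edge has exactly two vertices), contradicting $y \notin \{i,j\}$. One should also note at the outset that the $v_y$ are distinct (guaranteed since $\varphi$ is built from a bijection onto $\vertex(P)$ in Construction~\ref{cvxregpolyembedgen}), which is what makes the edge a genuine segment and the preimage argument meaningful.
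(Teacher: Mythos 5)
Your proof is correct, and it is actually more complete than the one the paper gives. The paper's argument only observes that $v_i$ and $v_j$ are affinely independent, which yields uniqueness of the representation of $u$ as a convex combination \emph{of those two vertices}; it does not explicitly rule out a distribution supported outside $\{i,j\}$ that happens to map onto the edge. That exclusion is the real content of the lemma, and your supporting-hyperplane step supplies it: since $e_{(v_i,v_j)} = P \cap H(a,\beta)$ with $\langle a, v_y\rangle < \beta$ strictly for every vertex not on the face, any $p$ with $\varphi(p)=u \in H$ must place zero mass on all $y \notin \{i,j\}$, after which the affine-independence (injectivity of $t \mapsto t v_i + (1-t)v_j$) step finishes exactly as in the paper. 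Your justification of the strict inequality — a vertex lying on $H$ would belong to $P \cap H = e_{(v_i,v_j)}$ and hence be one of its two extreme points — is also sound. In short, you and the paper reach the same conclusion through the same final step, but your hyperplane argument closes a gap the paper leaves implicit.
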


\begin{proof}
Observe, the two vertices of an edge define the convex hull making up the edge and hence,  by (\cite{gruber2007convex} ,Theorem 2.3) the two vertices are affinely independent.
Therefore, all elements of the edge have a unique convex combination which are expressed by the convex combinations of the edge's vertices.
Given the relation of the embedding $\varphi$ and convex combinations of vertices expressing distributions, it holds that $|\varphi^{-1}(u)|=1$.
\end{proof}

%%%%%%%%%%%%%%%%%%%%%%%

\begin{lemma}\label{lem:samedim}
  Let $L^2$ be a Square Loss, $\varphi$ be a polytope embedding, and $L^{2}_{\varphi}$ be the induced loss by $(L^2,\varphi)$.
  For all $y\in\Y$, it holds that $\dim (\varphi (\text{mode}_y))=\dim (P)\geq 2$.
\end{lemma}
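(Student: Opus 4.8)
Lemma~\ref{lem:samedim} asserts that for each outcome $y$, the image of the mode level set $\modeprop_y$ under the polytope embedding $\varphi$ is a full-dimensional subset of $P$, i.e.\ $\dim(\varphi(\modeprop_y)) = \dim(P)$, and moreover $\dim(P) \geq 2$.

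\textbf{Proof plan.} The plan is to exhibit an explicit full-dimensional body inside $\varphi(\modeprop_y)$. First I would recall that $\modeprop_y = \{p \in \simplex \mid p_y \geq p_{\hat y} \text{ for all } \hat y \in \Y\}$, which is a full-dimensional polytope in $\simplex$ (it has nonempty relative interior --- e.g.\ any $p$ with $p_y$ strictly larger than all other coordinates lies in its relative interior), so $\dim(\modeprop_y) = \dim(\simplex) = n-1$. Next, since $\varphi$ is an affine (indeed linear) map, $\varphi(\modeprop_y)$ is a convex polytope by Lemma~\ref{lem:polytopoly}. To get full dimension, the key observation is that $\varphi(\simplex) = P$ (every point of $P$ is a convex combination of the $v_y$'s, which are exactly the images of the $\delta_y$'s under $\varphi$), so $\dim(\varphi(\simplex)) = \dim(P)$. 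I would then argue that $\varphi$ restricted to $\modeprop_y$ still hits a full-dimensional chunk: take the point $\bar p = \varphi^{-1}$-preimage structure around $\delta_y$. Concretely, for small $\alpha \in (0, 1/2)$ the set $\conv(\Psi^y_\alpha) = \{p : p_y \geq 1-\alpha\} \subseteq \modeprop_y$, and by linearity $\varphi(\conv(\Psi^y_\alpha)) = \conv(\{(1-\alpha)v_y + \alpha v_{\hat y} : \hat y \in \Y\})$ is a scaled copy of $P$ (this is exactly the computation done in the proof of Theorem~\ref{thm:lownoisexists}), hence has dimension equal to $\dim(P)$. Since $\varphi(\conv(\Psi^y_\alpha)) \subseteq \varphi(\modeprop_y) \subseteq P$, monotonicity of dimension gives $\dim(\varphi(\modeprop_y)) = \dim(P)$.

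For the second claim $\dim(P) \geq 2$: the polytope embedding (Construction~\ref{cvxregpolyembedgen}) requires $|\vertex(P)| = n$ with $d < n-1$, and for the construction to be nonvacuous we need $n \geq 3$ (if $n \leq 2$ there is no valid embedding since $d < n - 1 \leq 1$ forces $d \leq 0$). A polytope with $n \geq 3$ distinct vertices cannot have dimension $0$ (one point) or $1$ (a segment has only $2$ vertices), so $\dim(P) \geq 2$.

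\textbf{Main obstacle.} The only delicate point is making precise that $\varphi(\modeprop_y)$ really attains dimension $\dim(P)$ rather than merely being contained in $P$; affine images can drop dimension. The clean way around this is the scaled-copy-of-$P$ argument above, which I would lean on rather than trying to track the rank of $\varphi$ restricted to the affine hull of $\modeprop_y$ directly. One should also double-check the edge case where $P$ is not full-dimensional in $\reals^d$ (i.e.\ $\dim(P) < d$); the statement is phrased relative to $\dim(P)$ precisely to accommodate this, and the scaled-copy argument respects it since a scaled translate of $P$ has the same dimension as $P$.
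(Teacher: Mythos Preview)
Your proof is correct but follows a different route from the paper's. The paper argues via the edge structure of $P$: using Lemma~\ref{lemma:bdpw}, each edge $e_{v_y,\hat v}$ with $\hat v \in \neigh(v_y)$ has unique preimages under $\varphi$, so the half of each such edge nearer $v_y$ lies in $\varphi(\modeprop_y)$; since $v_y$ has at least $\dim(P)$ neighbors, the convex hull of these half-edges (which sits inside $\varphi(\modeprop_y)$ by Lemma~\ref{lem:polytopoly}) already has dimension $\dim(P)$. Your argument instead transplants the scaled-copy computation from Theorem~\ref{thm:lownoisexists}: the set $\conv(\Psi^y_\alpha)\subseteq \modeprop_y$ for $\alpha\in(0,1/2)$ maps under $\varphi$ to $(1-\alpha)v_y+\alpha P$, an affine copy of $P$ with the same dimension. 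Your route is arguably cleaner---it avoids the edge-uniqueness lemma and the combinatorial fact about vertex degree, and directly reuses a calculation already present in the paper---while the paper's route keeps the argument self-contained within the edge machinery developed for Theorem~\ref{thm:consvert}. Your justification of $\dim(P)\geq 2$ is also more explicit than the paper's one-line appeal to the construction.
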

\begin{proof}
By the construction of $\varphi$, we know that $\dim (P) \geq 2$.
Fix $y\in\Y$.
By Lemma \ref{lemma:bdpw}, we know that any edge connected from $v_y$ and $\hat{v}\in \neigh (v_y)$, the distributions embedded into the half of the line segment closer to $v_y$, $y$ is in the mode.
By Lemma \ref{lem:polytopoly}, we know that $\varphi (\modeprop_y)$ is a convex set.
Thus, the convex hull of the half line segments is part of $\varphi (\modeprop_y)$.
Since each vertex has at least $\dim (P)$ neighbors, it holds that $\dim (\varphi (\modeprop_y))=\dim (P)$.
\end{proof}

%\consvert*

%%%%%%%%%%%%%%%%%%%%%%%%%
%%%%%%%%%%%%%%%%%%%%%%%%%

\subsection{Omitted Proofs from \S~\ref{sec:lownoise}}

%%%%%%%%%%%%%%%%%%%%%%%%%

\unitcubenoise*
\begin{proof}
W.l.o.g, say the outcome $y_1 \in\Y $ is embedded into $\ones_{[d]}\in \vertex (P^{\square})$.
Say $\alpha = .5$.
Observe that

\[
\Psi^{y_1}_{\alpha} = \left\{
\begin{pmatrix} 1\\ 0\\ \vdots \\ 0 \\ 0 \end{pmatrix},
\begin{pmatrix} 1-\alpha \\ \alpha\\ \vdots \\ 0 \\ 0 \end{pmatrix},
\begin{pmatrix} 1-\alpha\\ 0\\ \alpha\\ \vdots \\ 0 \end{pmatrix},
\dots , 
\begin{pmatrix} 1-\alpha\\ 0 \\ \vdots \\ \alpha \\ 0 \end{pmatrix},
\begin{pmatrix} 1-\alpha\\ 0\\ \vdots \\ 0 \\ \alpha \end{pmatrix}
\right\}
\]

\noindent and that $1\geq (1-\alpha)\pm \alpha \geq 0$ for any $\alpha \in (0,.5)$. 
Hence, for any $\alpha \in (0,.5)$ it holds that $P^{y_1}_{0.5}=\conv (\{0,1\}^{d})$ and furthermore $P_{\alpha}^{y_1}\subset P^{y_1}_{0.5} \subset \reals^{d}_{>0}$.
By symmetry of $P^{\square}$ and the linearity of $\varphi$, for any $\alpha \in (0,.5)$ and $y\in\Y$, we have that $P_{\alpha}^{y}$ is a strict subset of the orthant that contains $v_y$.
%\dk{Bo, is this enough explanation? Not totally obvious to me. But can reason about how a $\{-1,1\}$ vector corresponds to an orthant}
Hence, for all $y,\hat{y}\in \Y$ such that $y\neq \hat{y}$, it holds that $P^{y}_{\alpha}\cap P^{\hat{y}}_{\alpha}=\emptyset $.
%\dk{I don't think this easily follows from the example embedded outcome above, I think a generic version of the above embedding example is needed}
Thus by Theorem \ref{corr:lownoise}, $(L^{2}_{\varphi},\psi^{P^{\square},\alpha})$ is $\ell_{0-1}$-calibrated for $\Theta_{\alpha}$ where $\alpha \in (0,.5)$.
\end{proof}
%\bo{Looks good to me}

%%%%%%%%%%%%%%%%%%%%%%%%%

\permanoise*
\begin{proof}
%Let $\beta = \frac{d-1}{2}$
%and let $w=(0,\frac{1}{\beta d},\frac{2}{\beta d},\dots ,\frac{d-1}{\beta d})\in \reals^{d}$ and define $P^{w}$.
%\dk{don't think we need this first sentence, everything is defined in the theorem statement already}
Let $\Delta_{d}:= \conv (\{\ones_{[i]} \in \reals^d \mid i\in [d] \})$ and observe $P^{w}\subset \Delta_{d}$ since for all $\pi$, $\|\pi \cdot w \|_1 = \|w \|_1 = 1$.
%\bo{I was a bit unsure about the subscript $w$. It looks like $\Delta_w$ is the same as the simplex on $d$ outcomes, should we use $\Delta_d$ instead?}
Observe that $P^w$ can be symmetrically partitioned into $d!$ regions with disjoint interiors, one for each permutation $\pi \in \Sc_{d}$ via $\Delta_{d}^{\pi}:= \{u\in\Delta_d \mid u_{1}\leq \dots \leq u_{d} \}$.
%\dk{I think there should be a citation for this here.}
%\bo{Citation never hurts but I feel this one is ok, it's also the same as the ranking property. An issue though is that as defined these regions are not quite disjoint, for example, the uniform distribution satisfies $u_1 = \cdots = u_d$ and is in all of the regions.}
Fix $\pi \in \Sc_d$ and w.l.o.g assume $\pi$ is associated with the constraints $\Delta_{w}^{\pi}:= \{u\in\Delta_w \mid u_{1}\leq \dots \leq u_{d} \}$ implying that $\pi (w) = (\frac{0}{\beta d},\frac{1}{\beta d},\dots ,\frac{d-1}{\beta d})$. 
Let $\alpha =\frac{1}{d}$ and define $\Theta_{\alpha}$.
With respect to $\Theta_{\alpha}$, let $y:=\varphi^{-1}(\pi (w))\in\Y$ and $\hat{y}:=\varphi^{-1}(\hat{\pi}(w))\in\Y$ such that $\hat{\pi}\in\Sc_d$.
Thus the set $\Psi^{y}_{\alpha}:= \{(1-\frac{1}{d})\delta_y +(\frac{1}{d})\delta_{\hat{y}}\mid \hat{y}\in\Y \}$ is mapped via $\varphi$ to the following points 
$$\varphi (\Psi^{y}_{\alpha}) = \{ (1-\frac{1}{d} ) (\pi ( w) )+ (\frac{1}{d}) (\hat{\pi} ( w))\mid \hat{\pi}\in\Sc_d \}$$
within the permutahedron. 
%\dk{This needs to be shown with more detail.}

We shall show that $P^{y}_{\alpha}\subseteq \Delta_{d}^{\pi}$.
%\bo{I think an alternative proof is to show that each element of $\Psi_{\alpha}^y$ is mapped by $\phi$ into $\Delta_w^{\pi}$. Then the convex hull would be contained as well. If $\alpha$ is strictly less than $1/d$, then I think you could show pretty easily that the ordering is strict (i.e. $u_1 < \cdots < u_d$, which would address Dhamma's comment below.}
If this were not true, there would exists an element of $w^{\pi,\hat{\pi}} \in \varphi (\Psi^{y}_{\alpha})$ such such that for some pair of adjacent indices, say $i,i+1\in [d-1]$, $w^{\pi,\hat{\pi}}_{i}> w^{\pi,\hat{\pi}}_{i+1}$.
%\dk{should not use $v$ here since we've already used $v$ to represent a vertex.}
%\dk{also, there are $\{1, \dots, d\} = [d]$ indices, not $[d-1]$ correct?}
%\dk{this wording is too colloquial in my opinion, just `for contradiction' is better}
For sake of contradiction, fix $i\in [d-1]$ and assume there exists a $\hat{\pi}\in \Sc_d$ such that $w^{\pi,\hat{\pi}}_{i}> w^{\pi,\hat{\pi}}_{i+1}$.
Observe that any element of $\hat{\pi}(w)$ can be expressed by $\frac{j}{\beta d}$ using some $j \in \{0,1,\dots ,d-1\}$.
%Observe that any element of $\hat{\pi}\cdot w$ can be expressed by $(\frac{1}{d})(\frac{j}{\beta d})$ using some $j \in \{0,1,\dots ,d-1\}$.
Thus,
\begin{align*}
&  w^{\pi,\hat{\pi}}_{i}> w^{\pi,\hat{\pi}}_{i+1}  &\\
&  \Leftrightarrow  (1-\frac{1}{d})(\frac{i-1}{\beta d}) +  (\frac{1}{d})(\hat{\pi}( w))_j > (1-\frac{1}{d})(\frac{i}{\beta d}) +  (\frac{1}{d})(\hat{\pi}( w))_{\hat{j}}  & \\
& \Rightarrow \quad (1-\frac{1}{d})(\frac{i-1}{\beta d})+(\frac{1}{d})(\frac{j}{\beta d}) >  (1-\frac{1}{d})(\frac{i}{\beta d})+(\frac{1}{d})(\frac{\hat{j}}{\beta d})  &   \\
& \Rightarrow\quad (i-1)(1-\frac{1}{d})+j(\frac{1}{d}) > i(1-\frac{1}{d})+\hat{j}(\frac{1}{d}) & \quad  \text{Multiply by $\beta d$} \\
& \Rightarrow \quad	1-d >  \hat{j}-j & 
\end{align*}
for some $j,\hat{j} \in \{0,1,\dots ,d-1\}$ where $j\neq \hat{j}$.\newline 

\noindent \textbf{Case 1}: ($j<\hat{j}$): The smallest value possible for $\hat{j}-j$ is $0-(d-1)$ however, $1-d  \ngtr 1-d$.\newline 

\noindent \textbf{Case 2}:($j>\hat{j}$): The smallest value possible for $\hat{j}-j$ is $1$ however, $1-d \ngtr 1$.\newline 

%Notice, for each $y\in\Y$, $\varphi (\mode_y )=P^{w}\cap \Delta_{w}^{\pi}$ for some unique $\pi \in \Sc_d$.
%Assume there exists some $\hat{\pi}\in\Sc_d$ which contradicts the claim.
%Observe the first term reduces to 
%$$\frac{d-1}{d}(0,\frac{1}{d},\frac{2}{d},\dots ,\frac{d-1}{d})=(0,\frac{d-1}{d^2},\frac{2(d-1)}{d^2},\dots ,\frac{(d-1)^2}{d^2})~.$$
%Regardless what $\hat{\pi}\in\Sc_k$ is chosen since the max element is $\frac{d-1}{d^2}$, it holds for the second term $\alpha (\hat{\pi} \cdot w)=\frac{1}{d}(\hat{\pi} \cdot w)\in \Delta_{d}^{\pi}$.
%Besides zero in the first term, $\frac{d-1}{d^2}$ is the smallest value and every other term is a increasing multiple of it.
%Hence, regardless of the permutation $\hat{\pi}$, the initial convex combination lies in $\Delta_{d}^{\pi}$.

\noindent Hence, $P^{y}_{\alpha}\subseteq \Delta_{d}^{\pi}$ and specifically, there can exists an extreme point of $P^{y}_{\alpha}$ that lies on the boundary of $\Delta_{d}^{\pi}$ as shown in \textbf{Case 1}.
However, if $\alpha \in (0,\frac{1}{d})$, every extreme point of $P^{y}_{\alpha}$ moves closer to $\pi ( w)$ (besides the extreme point itself already on $\pi ( w)$) and therefore $ P^{y}_{\alpha}$ lies strictly within $\Delta_{d}^{\pi}$.
%\dk{this statement needs proof or justification }
By symmetry of $P^{w}$ and the linearity of $\varphi$, this would imply that for all $y',y''\in \Y$ such that $y'\neq y''$ it holds that $P^{y'}_{\alpha}\cap P^{y''}_{\alpha}=\emptyset $.
Thus by Theorem \ref{corr:lownoise}, $(L^{2}_{\varphi},\psi^{P^{w},\alpha} )$ is $\ell_{0-1}$-calibrated for $\Theta_{\alpha}$ where $\alpha \in (0,\frac{1}{d})$.
\end{proof}

%%%%%%%%%%%%%%%%%%%%%%%%%
%%%%%%%%%%%%%%%%%%%%%%%%%

\subsection{Omitted Proofs from \S~\ref{sec:multiinst}}

\compar* 

\begin{proof}
W.l.o.g, fix a diagonal pair $(v_a,v_b)$ and let $v_a :=\ones_{[1]}$ and $v_b :=-\ones_{[1]}$. 
Define the embedding $\varphi$ accordingly.
We will show that the following is true for all distributions mapped via $\varphi$ to $u\in P^{\oplus}$.
\begin{align*} %\dk{has to be multiline, otherwise goes off page}
    &||u-v_a ||_{2} > ||u-v_b ||_{2} \iff p_a<p_b\\
    \text{OR  } &||u-v_a ||_{2} < ||u-v_b ||_{2} \iff p_a>p_b \\
    \text{OR  } &||u-v_a ||_{2} = ||u-v_b ||_{2} \iff p_a=p_b.
\end{align*}

First, fix $p\in\Delta_{2d}$. Recall, by Proposition \ref{thm:embedmin}, the minimizing report for $L^{2}_{\varphi}$ in expectation is $u=\varphi (p) \in P \subset \reals^{d}$.
We will prove the forward direction of the first and second lines. Then the reverse directions follow from the contrapositives.
\newline 

\noindent \textbf{Case 1}, $\implies$: Assume for contradiction that $p_a < p_b$ and $||\varphi (p) - v_a ||_2 < ||\varphi (p) -v_b||_2$. 
Then
\begin{align*}
  \quad \langle \varphi (p)-\ones_{1}, \varphi (p)-\ones_{1} \rangle  <&   \langle \varphi (p)+\ones_{1}, \varphi (p)+\ones_{1} \rangle    \\
(u_1-1)^{2}+\sum_{i=1}u_{i}^{2} < & (u_1+1)^{2}+\sum_{i=1}u_{i}^{2} \\
-u_1 < & u_1 ~.
\end{align*}
By the definition of a $d$-cross polytope $P^{\oplus} := \conv (\{ \pi ((\pm {1},0,\dots , 0)) \mid \pi\in \Sc_d\} )$ and the orthogonal relation between vertices, to express a $u\in P^{\oplus}$ as a convex combination of vertices, each diagonal pair of vertices coefficients solely influence the position along a single unit basis vector.
Hence, due to the definition of $\varphi$, we have $u_1 = \ones_{[1]}\cdot p_a -\ones_{[1]}\cdot p_b  <0 $ since we have assumed that $p_a < p_b$.
Hence $-u_1 <  u_1 < 0$, a contradiction.
\newline 

\noindent \textbf{Case 2}, $\implies$: Assume $p_a > p_b$ and $||\varphi (p) - v_a ||_2 < ||\varphi (p) -v_b||_2$.
By symmetry with case 1, all the inequalities are reversed, leading to the contradiction that $-u_1 > u_1 > 0$. 
\newline

\noindent \textbf{Case 3}: ($p_a = p_b$): Follows from the if and only ifs of cases 1 and 2. 

%By defining $\psi_{\varphi}$ element wise per diagonal pair using the previously established inequality to distribution relations, we have that the link will map to the appropriate report for the property $\Gamma^{\varphi}$.
Hence $(L_\varphi^2, \psi_{\varphi})$ elicits $\Gamma^{\varphi}$.

\end{proof}
%\bo{happy with that proof}

%%%%%%%%%%%%%%%%%%%%%%%%%%%%%%%

\multiinstance*

\begin{proof}
We will elicit the mode via the intermediate properties, $\Gamma^{\varphi_j}$, defined in Lemma \ref{lem:compar}.
First we construct a set of embeddings so that we guarantee that all the $\varphi_j$'s allow comparison between any pair of outcome probabilities.
For example, for each unique pair $(a,b)_j \in {\Y \choose 2}$ define an embedding: $\varphi_j(\delta_a) = \ones_{[1]}$ and $\varphi_j(\delta_b) = -\ones_{[1]}$, and embed every other remaining report $r \in \Y \setminus \{a,b\}$ arbitrarily.
Since $(L_\varphi^2, \psi^{P^{\oplus}})$ elicits $\Gamma^{\varphi}$, minimizing each $L^2_{\varphi_j}$ with a separate model yields us comparisons via the link $\psi^{P^{\oplus}}$. 
To find the set $r \in \R$ such that $p_r$ is maximum, we use a sorting algorithm that uses pairwise comparisons, such as bubble sort.
Hence with $\Upsilon$ as Algorithm \ref{alg:bsort}, we have that $\Upsilon(\{L^2_{\varphi_j}, \psi^{P^{\oplus}} \}) = \text{mode}(p)$.
%\bo{I think this is good. The wording sounds a bit like it's handwavy, even though it's not. Consider removing ``At a high level'' and ``For example''.}

Assuming there exist $\varphi
_j$s such that there is no redundancy in comparison pairs between each $\Gamma^{\varphi_j}$, we would need only $ \frac{d(2d-1)}{d} = 2d-1$ problem instances.
Hence, we establish our lower bound on the needed number of problem instances.
%By choosing $\varphi^j$s more carefully, there could be no redundancy  in comparison pairs between each $\Gamma^{\varphi^j}$.
%In this case, only $ \frac{d(2d-1)}{d} = 2d-1$ intermediate instances are required.
%\bo{For this claim, I think it needs more support, i.e. an explicit construction showing this is possible.}
\end{proof}

%%%%%%%%%%%%%%%%%%%%%%%%%
%%%%%%%%%%%%%%%%%%%%%%%%%

\section{Hamming Loss Hallucination Example}\label{app:hal}

Hamming loss $\ell : \Y\times \Y \to \reals_{+}$ is defined by $\ell (y,\hat{y}) = \sum_{i=1}^{d}\ones_{y_i \neq \hat{y_i}}$ where $\Y =\{-1,1\}^d$.
Suppose $d=3$ and we have the following indexing over outcomes 
\begin{multline*}
\Y :=\{ y_1 \equiv (1,1,1), y_2 \equiv (1,1,-1), y_3 \equiv (1,-1,1), y_4 \equiv (-1,1,1), \\ y_5 \equiv (-1,-1,1), y_6 \equiv (1,-1,-1), y_7 \equiv (-1,1,-1), y_8 \equiv (-1,-1,-1)  \}~.
\end{multline*}

\noindent Let us define the following distribution  $$p_{\epsilon} =(0,\frac{1}{3}-\epsilon ,\frac{1}{3}-\epsilon,\frac{1}{3}-\epsilon,0,0,0,3\epsilon)\in\Delta_{\Y}$$ such that $\epsilon >0$.

\begin{itemize}
    \item $\E_{Y\sim p_{\epsilon}} [\ell (y_1,Y)] =1+6\epsilon$
    \item $\E_{Y\sim p_{\epsilon}} [\ell (y_2,Y)]=\E_{Y\sim p_{\epsilon}} [\ell (y_3,Y)]=\E_{Y\sim p_{\epsilon}} [\ell (y_4,Y)]= \frac{4}{3}+2\epsilon$
    \item $\E_{Y\sim p_{\epsilon}} [\ell (y_5,Y)]=\E_{Y\sim p_{\epsilon}} [\ell (y_6,Y)]=\E_{Y\sim p_{\epsilon}} [\ell (y_7,Y)]=\frac{7}{3}-4\epsilon$
    \item  $\E_{Y\sim p_{\epsilon}} [\ell (y_8,Y)]= 2-6\epsilon $
\end{itemize}
For all $\epsilon \in [0,\frac{1}{12})$, the minimizing report in expectation is $y_1=(1,1,1)$.
However, $p_{\epsilon,1}=0$ and thus, a hallucination would occur under a calibrated surrogate and link pair.

\section{Linking under Multiple Problem Instances}\label{app:multprob}

As stated in \S~\ref{sec:multiinst}, when using real data, given that these are asymptotic results, we may have conflicting logic for the provided individual reports.
In this section, we provide an approach such that the algorithm still reports information in the aforementioned scenario and will reduce to Algorithm \ref{alg:bsort} asymptotically. 
We build a binary relation table $M \in \{0,1\}^{n\times n}$ with the provided reports. 
Based on $M$, we select a largest subset of $S\subseteq \Y$ such that when $M$ is restricted to rows and columns corresponding to the elements of $S$, denoted by $M_S$, we have that $M_S$ is reflexive, antisymmetric, transitive, and strongly connected implying $M_S$ has a total-order relation defined over its elements.
Having a total-order relation infers the mode can be found via comparisons. 
The algorithm returns $(R, S)$, where $R$ is the mode set with respect to the elements of $S$.

\begin{algorithm}
\caption{Elicit mode via comparisons and the d-Cross Polytopes over well-defined partial orderings}
\begin{algorithmic}
\Require $M = \{ (L^{2}_{\varphi_j},\psi_{j}^{P^{\oplus}}) \}_{j=1}^{m}$
\State Learn a model $h_j:\X\to \reals^{d}$ for each instance  $(L^{2}_{\varphi_j},\psi_{j}^{P^{\oplus}}) \in M$
\State For some fixed $x\in\X$, collect all $B_j \gets \psi_{j}^{P^{\oplus}}(h_j(x))$ where $B_j\in\B_j$ 
\State Build $M \in \{0,1\}^{n\times n}$ binary relation table with provided $\{B_j \}_{j=1}^{m}$ as such
\begin{itemize}
    \item Label rows top to bottom by $y_1,\dots ,y_n$ and columns left to right by $y_1,\dots ,y_n$.
    \item For all $(\cdot, p_{y_i}, p_{y_k}) \in B_j$, if $p_{y_i} \leq p_{y_k}$ set $M[i,k]=1$ and $0$ otherwise.
\end{itemize}
\State Select largest subset $S\subseteq \Y$ such that $M_S$ is reflexive, antisymmetric, transitive, and strongly connected.
\State {Report $ (R, S) \gets \text{FindMaxElements-of-$S$}( M; S)$}
\end{algorithmic}
\end{algorithm}

%%%%%%%%%%%%%%%%%%%%%%%%%%%%%%%%%%%%%%%%%%%%%%%%%%%%%%%%%%%%

\end{document}